\documentclass{article}

\usepackage{fullpage}
\usepackage{mathtools,amssymb,amsfonts}
\usepackage{amsthm}
\usepackage{bm}
\usepackage{mathrsfs,dsfont}
\usepackage{mleftright}
\usepackage{derivative}
\usepackage[short]{optidef}
\usepackage{algorithm}
\usepackage{algpseudocodex}
\usepackage{graphicx}
\usepackage[export]{adjustbox}
\usepackage{subcaption}
\usepackage{hyperref}
\usepackage{cleveref}
\usepackage{textcomp}
\usepackage{xcolor}
\usepackage[backend=biber,style=ieee,natbib=true]{biblatex}
\usepackage[activate={true,nocompatibility},final,tracking=true,kerning=true,spacing=true,factor=1100,stretch=10,shrink=10]{microtype}

\microtypecontext{spacing=nonfrench}

\theoremstyle{plain}
\newtheorem{theorem}{Theorem}[section]

\newtheorem{lemma}[theorem]{Lemma}
\newtheorem{corollary}[theorem]{Corollary}
\theoremstyle{definition}
\newtheorem{definition}[theorem]{Definition}

\theoremstyle{remark}
\newtheorem{remark}[theorem]{Remark}

\DeclarePairedDelimiter\norm{\lVert}{\rVert}
\DeclarePairedDelimiter\abs{\lvert}{\rvert}
\DeclarePairedDelimiter\inner{\langle}{\rangle}
\DeclarePairedDelimiterX{\KLdiv}[2]{(}{)}{#1\;\delimsize\|\;#2}

\newcommand{\ac}{\text{ac}}
\newcommand{\cC}{\mathcal{C}}
\newcommand{\cS}{\mathcal{S}}
\newcommand{\cP}{\mathcal{P}}
\newcommand{\cT}{\mathcal{T}}

\newcommand{\dsR}{\mathds{R}}

\newcommand{\pushforward}{\sharp}

\DeclareMathOperator{\id}{id}
\DeclareMathOperator{\E}{\mathds{E}}
\DeclareMathOperator{\Cov}{Cov}
\DeclareMathOperator{\Tan}{Tan}
\DeclareMathOperator{\Proj}{\Pi}
\DeclareMathOperator{\flow}{\mathbf{T}}
\DeclareMathOperator{\PS}{\mathcal{P}}
\DeclareMathOperator{\Wass}{\mathcal{W}}
\DeclareMathOperator{\TM}{\mathcal{T}}
\DeclareMathOperator{\PT}{\mathit{P}}
\DeclareMathOperator{\PTH}{\mathcal{P}}
\DeclareMathOperator{\PTM}{\mathscr{P}}
\DeclareMathOperator{\Lip}{Lip}
\DeclareMathOperator{\supp}{supp}
\DeclareMathOperator*{\argmax}{argmax}
\DeclareMathOperator*{\argmin}{argmin}

\title{Another Look at Log-PCA for Probability Measures: A Dynamical Formulation and Statistical Convergence}
\author{
\begin{tabular}{cc}
\begin{minipage}[t]{0.45\textwidth}
\centering
\textbf{Peng Xu}\\
\small \textit{Department of Statistics}\\
\textit{University of Illinois Urbana-Champaign}\\
\texttt{pengxu1@illinois.edu}
\end{minipage}
&
\begin{minipage}[t]{0.45\textwidth}
\centering
\textbf{Changbo Zhu}\\
\small \textit{Department of ACMS}\\
\textit{University of Notre Dame}\\
\texttt{czhu4@nd.edu}
\end{minipage}
\\[4.5em]
\begin{minipage}[t]{0.45\textwidth}
\centering
\textbf{Young-Heon Kim}\\
\small \textit{Department of Mathematics}\\
\textit{University of British Columbia}\\
\texttt{yhkim@math.ubc.ca}
\end{minipage}
&
\begin{minipage}[t]{0.45\textwidth}
\centering
\textbf{Xiaohui Chen}\\
\small \textit{Department of Mathematics}\\
\textit{Thomas Lord Department of Computer Science}\\
\textit{University of Southern California}\\
\texttt{xiaohuic@usc.edu}
\end{minipage}
\end{tabular}
}

\begin{document}

\maketitle
\begin{abstract}
This paper is concerned with learning principal variations of random probability measures on $\dsR^m$ under the Wasserstein geometry. We introduce a new dynamical formulation to interpret the log-PCA, a linearized principal geodesic analysis, as a variational approach. Our differentiable version, termed as the Wasserstein Tangential PCA (WT-PCA), captures the local principal modes of geodesic variations of a (weighted) probability measure on the Wasserstein space via its covariance operator at barycenter. Based on the dynamical perspective and leveraging parallel transport structure of the optimal transport problems, we derive a general statistical convergence rate of the empirical WT-PCA when estimated from data in terms of the 2-Wasserstein distance between the population and empirical barycenter reference measures.
\end{abstract}

\section{Introduction}
Principal component analysis (PCA) is a major statistical analysis and machine learning tool for dimensional reduction and visualization of high-dimensional datasets~\cite{Jolliffe2011}. Classical PCA in the Euclidean space is to find the eigenvectors associated with the top eigenvalues of the covariance matrix. Geometrically, PCA can be interpreted as finding the orthogonal directions that maximize the projected data variance to the \emph{linear} subspace spanned by those directions. Recently, efforts for extending the Euclidean PCA to capture variations for a collection of probability measures have been made~\citep{SeguyCuturi2015_NIPS,Bigot2017_GPCA,Cazelles2018_GPCA-logPCA}. Since the Wasserstein space is an infinite-dimensional curved space, one challenge is to define a proper notion of principal mode of variations in the space of probability measures.

In this paper, we take a \emph{variational} and \emph{dynamical} perspective of the Euclidean PCA that has robust generalization to the Wasserstein geometry. Specifically, given input data points $x_1,\dots,x_n$ in the Euclidean space $\dsR^m$, performing the standard PCA to find the first principal mode of variation $g_t = \overline{x}_n + t v$ passing through the mean $\overline{x}_n = n^{-1} \sum_{i=1}^n x_i$ can be reformulated as minimizing the residuals by projecting each data point in the direction $v$:
\begin{equation}\label{eqn:euclidean_pca_variational}
    \hat{v}_1 = \underset{\norm{v}_2 = 1}{\argmin} \sum_{i=1}^n \min_{t \in \dsR}\norm{x_i - (\overline{x}_n + t v)}_2^2.
\end{equation}
Subsequent components $\hat{v}_2,\cdots,\hat{v}_L$ can be found by iteratively solving~\eqref{eqn:euclidean_pca_variational} with the orthogonality constraints $\langle \hat{v}_j, \hat{v}_k \rangle = \delta_{jk}$. Note that the principal modes of linear variations in $\dsR^m$ are (complete) geodesics with constant velocity vector fields $\hat{v}_1,\dots,\hat{v}_L$. The situation is qualitatively different in the Wasserstein space: given a time-independent vector field $v$ at a reference measure $\nu$, the pushforward curve $t\mapsto g_t(v)_\#\nu = (\id + tv)_\#\nu$ is a Wasserstein geodesic only when $v = \nabla\varphi$ for a convex potential $\varphi$, and even then only up to a critical time $t^\star$, beyond which $\id+ tv$ ceases to be optimal. This obstruction makes principal modes of variation harder to interpret for a collection of probability measures than in the Euclidean case.

To mitigate this challenge, our key observation is that the inner variation in $t \in \dsR$ for computing $\hat{v}_1$ in~\eqref{eqn:euclidean_pca_variational} can be solved analytically in the Euclidean space (or more generally in a Hilbert space), and the variational formulation of PCA in~\eqref{eqn:euclidean_pca_variational} is equivalent to the following tangential formulation:%\footnote{Equivalence of~\eqref{eqn:euclidean_pca_variational} and~\eqref{eqn:euclidean_pca_tangential} is derived in the Appendix.}:
\begin{equation}\label{eqn:euclidean_pca_tangential}
    \hat{v}_1 = \argmax_{\norm{v}_2 = 1}\sum_{i=1}^n \mleft( 
    \odv*{\frac{1}{2} \norm{x_i - (\overline{x}_n + t v)}_2^2}{t}_{t=0} \mright)^2.
\end{equation}
Since the objective function of~\eqref{eqn:euclidean_pca_tangential} equals to $\sum_{i=1}^n \langle x_i - \overline{x}_n, v \rangle^2$, one can resort to spectral decomposition of the covariance matrix of $x_1,\dots,x_n$ to solve~\eqref{eqn:euclidean_pca_tangential}. Thus in view of~\eqref{eqn:euclidean_pca_tangential}, to build a unified \emph{differentiable} version of PCA in a general metric space with a fast algorithm, we need two ingredients.
\begin{enumerate}
    \item An measure-valued ``average'' notion of probability measures $\nu_1,\dots,\nu_n$ as the reference measure, a Wasserstein counterpart of $\bar x_n$.
    \item An metric derivative of differentiable curves in the Wasserstein space.
\end{enumerate}
Regarding 1), as in the previous works, e.g.~\cite{Cazelles2018_GPCA-logPCA},
we consider the Wasserstein \emph{barycenter} defined in \cite{AguehCarlier2011} as our reference measure, which can be efficiently computed with gradient methods~\cite{KimYaoZhuChen2025_barycenter-nonconvex-concave,kim2025sobolevgradientascentoptimal,xu2026unifiedapproachcomputingwasserstein}. Regarding 2), we leverage the dynamical formulation of the OT problem based on the Otto calculus (cf. Chapter 18 in~\cite{ambrosio2021lectures}) to derive a principled metric derivative of (squared) Wasserstein distance, which eventually leads to a computationally efficient spectral algorithm for solving the PCA problem in the space of probability measures with principled statistical convergence rates.

\subsection{Our contribution}

In this paper, we first interpret the log-PCA algorithm (\cite{Cazelles2018_GPCA-logPCA} for probability data and \cite{Fletcher2004_PGA} for Riemannian manifolds) from a dynamical and variational perspective and illustrate its connection to the spectral method without explicitly imposing the tangential constraint. When specialized to a finite collection of input probability distributions, our formulation, termed as the \emph{Wasserstein Tangential PCA} (WT-PCA), is expressed as an unconstrained minimization of the Wasserstein covariance operator and practically equivalent to the log-PCA algorithm~\cite{Cazelles2018_GPCA-logPCA}. The benefit of our first-order dynamical formulation (i.e., Otto calculus) is that, together with tools from the second-order parallel transport of the OT problem, we can derive a general statistical rate of convergence of WT-PCA under the standard sampling model. Specializing to $n$ Gaussian inputs with the Bures-Wasserstein barycenter as our reference measure, we obtain convergence rate $O_p(n^{-1/4})$ for the eigenvalues and eigenfunctions of the aligned Wasserstein covariance operators.

\subsection{Related work}
The work \cite{Bigot2017_GPCA} proposed a geodesic PCA (GPCA) on the space of one-dimensional probability measures. Computation of GPCA was considered in~\cite{SeguyCuturi2015_NIPS,vesseron2026on}. Principal geodesic analysis (PGA) was introduced in~\cite{Fletcher2004_PGA} to analyze data on Riemannian manifolds and later extended to the Wasserstein space in~\cite{Wang:2013uj} to analyze image data. On the other hand, linearized PGA was studied in \cite{Sommer2010_linearized-PGA-comparison}, while \cite{Cazelles2018_GPCA-logPCA} introduced the term log-PCA and compared it with GPCA in the Wasserstein space. \cite{santoro2024statisticalinferencebureswassersteinflows} established statistical properties of log-PCA for time-varying covariance flows in the Bures--Wasserstein space (i.e., centered Gaussians).

\subsection{Organization of the paper} 
The rest of the paper has structure as follows. Section~\ref{sec:prelim_OT} provides some background on the theory of optimal transport. Section~\ref{sec:wt-pca} introduces our notion of the Wasserstein tangential PCA (WT-PCA) and presents its connection to the standard Euclidean PCA as a degenerate special case. Section~\ref{sec:stat_consistency} derives the statistical rate of convergence of the empirical WT-PCA estimated from data. Section~\ref{sec:sim} provides some numerical examples for applying WT-PCA to a collection of Gaussian measures. All proofs are relegated to the Appendix.

\section{Preliminaries on optimal transport}
\label{sec:prelim_OT}

In this section, we review some basics of OT theory.

\paragraph{Wasserstein distance} Let $M$ be a connected subset in $\dsR^m$ and $d(x,y) =\norm{x-y}_2$ be the distance between two points $x, y \in M$. Let $\cP(M)$ be the space (i.e., Wasserstein space) of all probability measures on $M$ equipped with the weak-$\ast$ topology. For $p \geq 1$, let $\cP_{p}(M)$ be the subset of probability measures $\mu \in \cP(M)$ with the finite $p$-th moment $\int_{M} d(x, x_0)^p\odif{\mu}(x) < \infty$ for some $x_0 \in M$. Let $\cP_{\ac}(M)$ (or $\cP_{\ac,p}(M)$) be the subset of absolutely continuous probability measures in $\cP(M)$ (or in $\cP_p(M)$). For any $\mu,\nu \in \cP_2(M)$, the Wasserstein distance $\Wass_2(\mu, \nu)$ is defined by
\begin{equation}
\label{eqn:wasserstein_distance}
\Wass_2^2(\mu, \nu)\coloneqq\inf_{\gamma} \int_{M \times M} d^2(x,y)\odif{\gamma}(x,y),
\end{equation}
where the infimum is taken over all probability measures (i.e., couplings) $\gamma$ on $M \times M$ with marginals $\mu$ and $\nu$. Moreover, an optimal coupling $\gamma^*$ exists for problem~\eqref{eqn:wasserstein_distance}.

\paragraph{Barycenter} Next we define the geometric center of probability measures over the Wasserstein space, an analog of averaging operation in the Euclidean space. \cite{AguehCarlier2011} introduced the Wasserstein barycenter, which is also known as the Frech\'{e}t mean for the metric space $(\cP_2(M), \Wass_2)$.
\begin{definition}[Wasserstein barycenter]
\label{defn:wasserstein_barycenter}
Let $\Omega\in\cP_2\bigl(\cP_2(M)\bigr)$ be a probability measure on $\cP_2(M)$. A Wasserstein \emph{barycenter} $\bar{\nu}$ of $\Omega$ is any minimizer among probability measures $\nu \in \cP_2(M)$ of the functional
\begin{equation}
\label{eqn:barycenter}
\nu \mapsto \int_{\cP_2(M)}\Wass_{2}^{2}(\mu, \nu)\odif{\Omega}(\mu).
\end{equation}
\end{definition}

We denote by $\bar{\nu}$ the Wasserstein barycenter of $\Omega$. \cite{AguehCarlier2011} showed existence and uniqueness (and absolute continuous property) of $\bar{\nu}$ are established for finitely supported measures $\Omega \in \cP_2\bigl(\cP_2(M)\bigr)$. For general measures $\Omega$ (also over Riemannian manifolds), we have the following result \cite{KIM2017640}.

\begin{lemma}[Existence and uniqueness of the Wasserstein barycenter]
\label{lem:existence+uniqueness_barycenter}
Let $M$ be a connected and compact $d$-dimensional Riemannian manifold. If $\Omega\bigl(\cP_{\ac}(M)\bigr) > 0$, then there exists a unique Wasserstein barycenter of $\Omega$. Moreover, the Wasserstein barycenter is also an absolutely continuous measure.
\end{lemma}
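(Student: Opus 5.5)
Existence follows from the direct method, and uniqueness and absolute continuity both come from the dual (Kantorovich-potential) description of the barycenter combined with McCann's theorem on compact Riemannian manifolds.

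\emph{Existence.} Since $M$ is compact, $\cP_2(M)=\cP(M)$ is weak-$\ast$ compact by Prokhorov's theorem, and $\Wass_2$ metrizes weak-$\ast$ convergence. The map $\nu\mapsto \Wass_2^2(\mu,\nu)$ is continuous and bounded by $\operatorname{diam}(M)^2$ uniformly in $\mu$. Dominated convergence then makes the functional in~\eqref{eqn:barycenter} continuous on a compact set, so a minimizer $\bar\nu$ exists.

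\emph{Uniqueness.} Squared distance is convex along linear interpolations: for $\nu_s=(1-s)\nu_0+s\nu_1$, gluing optimal couplings gives $\Wass_2^2(\mu,\nu_s)\le (1-s)\Wass_2^2(\mu,\nu_0)+s\Wass_2^2(\mu,\nu_1)$. Hence the minimizing set is convex. The key claim is strict convexity of $\nu\mapsto \Wass_2^2(\mu,\nu)$ whenever $\mu\in\cP_{\ac}(M)$. To prove it, I would use Kantorovich duality. By McCann's theorem on compact manifolds, for absolutely continuous $\mu$ the optimal plan is induced by $T=\exp(-\nabla\phi)$ with $\phi$ a $d^2/2$-concave potential, unique up to constants $\mu$-a.e. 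Suppose equality holds in the convexity inequality at some $s\in(0,1)$. Then the optimal potential pair for $(\mu,\nu_s)$ must also be optimal for both $(\mu,\nu_0)$ and $(\mu,\nu_1)$. It follows that $\nu_0=T_\#\mu=\nu_1$, since $T$ is determined by $\phi$. Integrating against $\Omega$, equality for two distinct minimizers forces equality $\Omega$-a.e. Because $\Omega(\cP_{\ac}(M))>0$, this set carries positive mass, so strict convexity on it yields a contradiction.

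\emph{Absolute continuity.} This is the step I expect to be the main obstacle. The plan is to follow Kim--Pass: derive a first-order optimality condition
\begin{equation*}
\int \phi_\mu \odif{\Omega}(\mu) = \text{const} \quad \bar\nu\text{-a.e.},
\end{equation*}
where $\phi_\mu$ is the Kantorovich potential from $\bar\nu$ to $\mu$. Each such $\phi_\mu$ is semiconcave with constants uniform in $\mu$, which follows from compactness and curvature bounds of $M$. This gives an inverse map from each $\mu\in\cP_{\ac}(M)$ back toward $\bar\nu$, and the idea is to show that $\bar\nu$ is dominated by a contraction of these absolutely continuous measures. The tool is a Jacobian estimate along the interpolating maps, from Cordero-Erausquin--McCann--Schmuckenschläger: displacement interpolation from $\mu$ does not concentrate mass. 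The delicate point is handling the positive-measure (not full-measure) subset of absolutely continuous $\mu$'s and integrating the density bounds over $\Omega$. I would first prove the claim for finitely supported $\Omega$ and then pass to the limit using uniform semiconcavity.
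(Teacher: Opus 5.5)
The paper gives no proof of this lemma; it is quoted from Kim--Pass \cite{KIM2017640}. Your existence argument is correct and standard: the direct method on the weak-$\ast$ compact set $\cP(M)$. Your uniqueness argument is also correct and standard. Any optimal dual pair for $(\mu,\nu_s)$ is forced to be optimal for $(\mu,\nu_0)$ and $(\mu,\nu_1)$, so McCann's map $\exp(-\nabla\phi)$ pushes $\mu$ to both. This gives strict convexity of $\nu\mapsto\Wass_2^2(\mu,\nu)$ for $\mu\in\cP_{\ac}(M)$, and you then integrate over a set of positive $\Omega$-mass. The gap is in the absolute-continuity part, at the step ``prove it for finitely supported $\Omega$, then pass to the limit using uniform semiconcavity.''

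That step does not work as stated.

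\emph{Why it fails.} Absolute continuity is not preserved under weak-$\ast$ limits. Uniform semiconcavity only gives compactness of potentials and stability of maps; it says nothing about non-concentration of the approximating barycenters $\bar\nu_k$. The finite-support estimate you would carry over is of Agueh--Carlier type, $\lVert\rho_{\bar\nu}\rVert_{L^\infty}\le C\lambda_1^{-d}\lVert\rho_{\mu_1}\rVert_{L^\infty}$, where $\lambda_1$ is the weight of a single a.c.\ atom. If the a.c.\ part of $\Omega$ is non-atomic, every atom of a finitely supported $\Omega_k\to\Omega$ has weight tending to $0$, so this bound blows up. In any case, an a.c.\ $\mu$ need not have bounded density.

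\emph{What is missing.} You need a bound that is uniform in $k$ and depends only on the total a.c.\ mass $a_k=\Omega_k(\cP_{\ac}(M))\to a>0$. One way is as follows. In the finite case, combine the Jacobian equations $\rho_{\bar\nu_k}(x)=\rho_\mu(T_\mu x)J_\mu(x)$ over \emph{all} a.c.\ atoms using concavity of $\log\det$. The weighted sum of the symmetric factors of $DT_\mu$ over these atoms is bounded above by a constant, because of the dual relation and the uniform semiconcavity of all remaining potentials. This gives $\rho_{\bar\nu_k}\le C a_k^{-d}\prod_\mu\rho_\mu(T_\mu x)^{\lambda_\mu/a_k}$. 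A truncation argument then yields $\bar\nu_k(E)\le \tfrac{2}{a_k}\int_{\cP_{\ac}(M)}\mu(\{\rho_\mu>K\})\,\mathrm{d}\Omega_k(\mu)+C_aK\,\mathrm{vol}(E)$, and this estimate does survive the limit.

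\emph{A related gap in the optimality condition.} As you state it, $\int\phi_\mu\,\mathrm{d}\Omega$ is constant only $\bar\nu$-a.e. That gives no second-order information on a putative singular part of $\bar\nu$. What actually produces the Hessian bound is the dual form: $\int u_\mu\,\mathrm{d}\Omega(\mu)\ge\text{const}$ on all of $M$, with equality on $\supp\bar\nu$. Then at each support point the partial sum of the a.c.\ potentials is touched from below by a paraboloid of uniform opening.
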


\paragraph{Optimal transport map} Given two probability measures $\mu,\nu \in \cP_2(M)$ such that $\mu$ does not give mass to small sets (e.g. $\mu \in P_{ac}(M)$), Brenier's theorem \cite{Brenier_polarization_1991} (cf.~Theorem 2.12 in~\cite{villani2003topics}) states that there is a unique optimal transport map $T\colon M \to M$ pushing forward $\mu$ to $\nu$ (i.e., $T_\pushforward\mu = \nu$) such that
\begin{equation}
\label{eqn:OT_map}
\Wass_2^2(\mu, \nu)=\int_M \norm{x-T(x)}_2^2 \odif{\mu}(x).
\end{equation}
In addition, $T = \nabla \psi$ is given by the gradient of some convex function $\psi\colon M \to \dsR$.

\paragraph{Tangent spaces}
One can define the tangent spaces on $\cP_2(M)$ following the Riemannian interpretation of \cite{doi:10.1081/PDE-100002243} (cf.~\cite[Chapter 8.4]{AmbrosioGigliSavare2008}). Namely, the \emph{tangent space}
to $\cP_2(M)$ at $\mu \in \cP_2(M)$ is
\begin{equation}
\label{eqn:tangent_space}
\Tan_{\mu} \cP_2(M) = \overline{\left\{ \nabla \psi\mid \psi \in C_c^\infty(M) \right\}}^{L_2(\mu)},
\end{equation}
where $C_c^\infty(M)$ is the space of smooth functions with compact support in $M$, and $\overline{v}^{L_2(\mu)}$ denotes the $L_2$ closure of a vector field $v$ w.r.t.~$\mu$. Clearly, we have $\Tan_{\mu} \cP_2(M) \subset L_2(\mu)$. If the base measure $\mu \in \cP_2(M)$ is absolutely continuous with respective to the volume measure of $M$,  then $\Tan_{\mu} \cP_2(M)$ is a Hilbert space (cf. Proposition A.33 in~\cite{LottVillani2009}. Further, we can represent
\[
\Tan_{\mu} \cP_2(M)\coloneqq\log_\mu\bigl(\cP_2(M)\bigr),
\]
where $\log_\mu(\nu)\coloneqq T_{\mu \to \nu} - \id$ with $\nu \in \cP_2(M)$.

\paragraph{Benamou-Brenier formula} 
A curve $\mu_{t}\colon[0,1] \rightarrow \mathcal{P}_2(M)$ is said to be absolutely continuous if the following holds for some $g \in L^1([0,1])$:
\begin{align*}
    \Wass_2(\mu_t, \mu_{s}) \leq \int_s^t g(r)\odif{r} \quad \forall s, t \in [0,1], s \leq t.
\end{align*}
The Wasserstein distance $\Wass_2$ has a dynamical representation known as the Benamou-Brenier formula~\cite{BenamouBrenier2000}:
\begin{equation}\label{eqn:benamou-brenier}
    \Wass_2^2(\mu, \nu) = \min_{\mu_t,v_t}\int_0^1 \int_M\norm{v_t(x)}_2^2\odif{\mu}_t(x) \odif{t},
\end{equation}
where the minimization is taken among all pairs $(\mu_t, v_t)$ with an absolutely continuous curve $\mu_t$ in $\cP_2(M)$ and a time-dependent (velocity) vector field $v_t$ such that they solve the continuity equation (in the weak sense):
\begin{equation}\label{eqn:cty_eqn}
    \frac{\partial \mu_t}{\partial t} + \nabla\cdot(\mu_t v_t) = 0
\end{equation}
subject to the boundary conditions $\mu_0 = \mu$ and $\mu_1 = \nu$. 

Furthermore, $v_t$ is called the tangent (velocity) vector field of the absolute continuous curve $\mu_t$ \citep{maa/1228920869}, if $(\mu_t, v_t)$ satisfies the continuity equation, $\norm{v_t}_{\mu_t} \in L^1([0,1]) $, and
\begin{align*}
    \norm{v_t}_{\mu_t} \leq\abs{\mu_t'}\text{ for almost all } t,
\end{align*}
where \[\abs{\mu_t'}\coloneqq \lim_{h \rightarrow 0}\frac{\Wass_2(\mu_{t+h}, \mu_t)}{\abs{h}}\] represents the metric derivative of $\mu_t$. $(\mu_t)$ is said to be regular if $\int\Lip(v_t)\odif{t} < \infty$. For a regular curve $(\mu_t)$, there exists a unique family of maps \(\flow(s,t,x)\colon\supp(\mu_s)\to\supp(\mu_t)\), called \emph{flow maps}, such that for all \(s,t\in[0,1]\), \[\begin{dcases}
\flow(s,s,x)=x,\\
\odv*{\flow(s,\tau,x)}{\tau}_{\tau=t}=v_t\bigl(\flow(s,t,x)\bigr),\\
\flow\bigl(t,r,\flow(s,t,x)\bigr)=\flow(s,r,x),\\
\flow(s,t,\cdot)_{\pushforward}\mu_s=\mu_t.
\end{dcases}\]

\paragraph{Geodesics in $\PS_2$} Let $\mu, \nu \in \cP_{2,ac} (M)$ such that there is a unique optimal transport map $T = \nabla \psi$ for some convex $\psi$ pushing $\mu$ to $\nu$ (i.e., $T_\pushforward \mu = \nu)$ under the cost $d^2(x,y)/2$. Then the \emph{displacement interpolation} (a.k.a. McCann's interpolation~\cite{MCCANN1997153}) from $\mu_0 = \mu$ to $\mu_1 = \nu$ is given by
\begin{equation}
    \label{eqn:displacement_interpolation}
    \mu_t \coloneqq \bigl( (1-t) \id + t T \bigr)_\pushforward\mu,
\end{equation}
which defines the unique constant-speed geodesic in $ \cP_{2,ac} (M)$ connecting $\mu$ and $\nu$.

\section{Wasserstein tangential PCA}
\label{sec:wt-pca}

Now we turn to our main goal to derive a PCA formulation for probability measures by generalizing the variational formulation of the standard PCA in~\eqref{eqn:euclidean_pca_variational}. Consider a distribution $\Omega \in \cP_2\bigl(\cP_2(M)\bigr)$ on the space $\cP_2(M)$ of probability measures with finite second moment. Suppose that $\Omega$ has a unique Wasserstein barycenter $\bar{\nu}$. Then, the residual minimization formulation for the first principal mode of geodesic variation in $\cP_2(M)$ can be defined as any solution of
\begin{mini}[2]
{\scriptstyle\xi\in T_{\bar{\nu}} \cP_2(M)}{\E_{\nu \sim \Omega} \left[\min_{t \in (-\varepsilon, \varepsilon)}\Wass_{2}^{2}\bigl((\id + \xi_{t})_\pushforward {\bar{\nu}}, \nu\bigr) \right] }{\label{eqn:wt-pca_variational}}{}
\addConstraint{\norm{\xi}_{L^2(\bar{\nu})}^2}{=1.}
\end{mini}
where $\xi_t = t(\xi - \id)$ and $\xi$ is a tangent vector in $\Tan_{\bar{\nu}} \cP_2(M)$. Here, $\varepsilon=\varepsilon(\xi) > 0$, depending on $\xi$, is the maximal radius such that the curve $t \mapsto (\id + \xi_{t})_\pushforward {\bar{\nu}}$ is a geodesic passing through $\bar{\nu}$ for $t \in (-\varepsilon, \varepsilon)$. The dependence of $\varepsilon$ on both $\xi$ and the barycenter $\bar{\nu}$ is a serious drawback: especially because $\bar{\nu}$ itself is implicitly specified through the underlying distribution $\Omega$. Translating properties of $\Omega$ into corresponding properties of $\bar{\nu}$ is an open challenge. Existing results address only rather elementary properties such as absolute continuity; see, e.g.~\cite{AguehCarlier2011, KIM2017640}.

Recall the connection between PCA in variational form~\eqref{eqn:euclidean_pca_variational} and in tangential form~\eqref{eqn:euclidean_pca_tangential} in $\dsR^m$. Our idea is to turn the variational formulation~\eqref{eqn:wt-pca_variational} into the Wasserstein derivative form
\begin{maxi}[2]
{\scriptstyle\xi}{\E_{\nu \sim \Omega} \mleft[\odv*{\frac{1}{2}\Wass_{2}^{2}\bigl((\id+\xi_{t}) {\sharp}{\bar{\nu}}, \nu\bigr)}{t}_{t=0} \mright]^{2}}{\label{eqn:wt-pca_tangential}}{}
\addConstraint{\xi}{ \in \Tan_{\bar{\nu}} \cP_2(M)}
\addConstraint{\norm{\xi}_{L^2(\bar{\nu})}^2}{=1.}
\end{maxi}
In contrast to~\eqref{eqn:wt-pca_variational}, the tangential form~\eqref{eqn:wt-pca_tangential} for finding the principal variation mode has the benefit of avoiding specifying size of the  neighborhood $(-\varepsilon, \varepsilon)$ since the Wasserstein derivative depends only on the \emph{local} variations along the geodesic curve $t \mapsto (\id + \xi_t) \sharp \bar{\nu}$. On the other hand, unlike the Euclidean PCA, one cannot emanate this curve for large enough $|t|$ as a geodesic. Thus our PCA formulation, either in variation form~\eqref{eqn:wt-pca_variational} or tangential form~\eqref{eqn:wt-pca_tangential}, is suitable for capturing the principal mode of variations around the barycenter along some geodesic curve that is interpretable.

To equip~\eqref{eqn:wt-pca_tangential} with a practical algorithm, we need to compute the metric derivative of the squared Wasserstein distance.

\begin{theorem}[Derivative of squared Wasserstein distance {\citep[Cor.~5.25]{santambrogio2015optimal}}]
\label{thm:wasserstein_distance_derivative}
Let $M$ be a connected subset of $\dsR^m$ and $\varepsilon > 0$. Let $(\mu_t)_{t \in [-\varepsilon, \varepsilon)}$ be a weakly continuous curve $[-\varepsilon, \varepsilon) \to \cP(M)$ such that $\mu_t \in \cP_{\ac,2}(M)$ for all $t \in (-\varepsilon, \varepsilon)$ and $\mu_t$ solves the continuity equation~\eqref{eqn:cty_eqn} with $v_t = \xi$. Suppose $\xi$ is a locally Lipschitz vector field such that $\int_{-\varepsilon}^{\varepsilon} \int_M \norm{\xi}_2^2\odif{\mu}_t\odif{t} < \infty.$
Then the curve $t \mapsto \mu_t$ is absolutely continuous and for almost everywhere $t \in (-\varepsilon, \varepsilon)$, we have
\begin{equation}
\label{eqn:wasserstein_distance_derivative}
\odv*{\left(\frac{1}{2}\Wass_2^2(\mu_t, \nu) \right)}{t}= \int_{M}\inner{x - T_t(x), \xi(x)}\odif{\mu_t}(x),
\end{equation}
where $T_t$ is the optimal transport map from $\mu_t$ to $\nu$ under the cost function $d(x,y)^2/2$.
\end{theorem}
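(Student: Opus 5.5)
The plan is the standard Santambrogio argument: first get absolute continuity of $t\mapsto\mu_t$, then sandwich the derivative of $\frac12\Wass_2^2(\mu_t,\nu)$ between one-sided difference quotients using Kantorovich duality.

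\textbf{Step 1: absolute continuity.} Since $\xi$ is locally Lipschitz, the flow $\flow(s,t,\cdot)$ exists along the curve and $\flow(s,t,\cdot)_\pushforward\mu_s=\mu_t$. Using the coupling $(\id,\flow(s,t,\cdot))_\pushforward\mu_s$ and Jensen, we get
\[
\Wass_2(\mu_s,\mu_t)\le\int_s^t\norm{\xi}_{L^2(\mu_r)}\odif{r}.
\]
The right-hand side has an $L^1$ integrand by the assumption $\int\int\norm{\xi}^2\odif{\mu_t}\odif{t}<\infty$ and Cauchy--Schwarz. Hence the curve is absolutely continuous. Since $\Wass_2(\cdot,\nu)$ is $1$-Lipschitz, the map
\[
f(t)\coloneqq\tfrac12\Wass_2^2(\mu_t,\nu)
\]
is absolutely continuous on compact subintervals (locally bounded times AC), so it is differentiable almost everywhere.

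\textbf{Step 2: duality lower bound.} Fix $t_0$ at which $f$ is differentiable. Let $\varphi_{t_0}$ be a Kantorovich potential from $\mu_{t_0}$ to $\nu$ for the cost $d^2/2$, with $c$-conjugate $\varphi^c_{t_0}$. Brenier's theorem (as recalled in Section~\ref{sec:prelim_OT}, using $\mu_{t_0}\in\cP_{\ac,2}$) gives $\nabla\varphi_{t_0}(x)=x-T_{t_0}(x)$ $\mu_{t_0}$-a.e. Duality then yields, for every $t$,
\[
f(t)\ge\int\varphi_{t_0}\odif{\mu_t}+\int\varphi^c_{t_0}\odif{\nu},
\]
with equality at $t=t_0$. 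So $g(t)\coloneqq f(t)-\int\varphi_{t_0}\odif{\mu_t}$ attains its minimum at $t_0$.

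\textbf{Step 3: identify the derivative of the potential term.} By the weak continuity equation with velocity $\xi$,
\[
\odv{}{t}\int\varphi_{t_0}\odif{\mu_t}=\int\inner{\nabla\varphi_{t_0},\xi}\odif{\mu_t}.
\]
If this term is differentiable at $t_0$, then $g'(t_0)=0$ because $t_0$ is a minimizer. That gives
\[
f'(t_0)=\int\inner{x-T_{t_0}(x),\xi(x)}\odif{\mu_{t_0}}(x),
\]
which is \eqref{eqn:wasserstein_distance_derivative}.

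\textbf{Main obstacle: regularity of $\varphi_{t_0}$.} The potential is only semiconcave and locally Lipschitz (since $\tfrac12\norm{x}^2-\varphi_{t_0}$ is convex), and it may be unbounded on a noncompact $M$, so it is not a legitimate test function in the weak continuity equation.

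I would first handle this with the flow representation:
\[
\int\varphi_{t_0}\odif{\mu_t}=\int\varphi_{t_0}\bigl(\flow(t_0,t,x)\bigr)\odif{\mu_{t_0}}(x).
\]
Differentiate under the integral at $t=t_0$. Pointwise, $\varphi_{t_0}$ is differentiable $\mu_{t_0}$-a.e. by Rademacher, using absolute continuity of $\mu_{t_0}$. For domination, use the local Lipschitz bounds on $\varphi_{t_0}$ and $\flow$, and the growth bound $\abs{\nabla\varphi_{t_0}(x)}\le\abs{x}+\abs{T_{t_0}(x)}\in L^2(\mu_{t_0})$.

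If domination fails globally, I will truncate with cutoffs $\chi_R\varphi_{t_0}$. Then I only need one-sided inequalities: the liminf of the forward difference quotient of $g$ is $\ge0$ and the limsup of the backward quotient is $\le0$. Combined with the assumed differentiability of $f$ at $t_0$, this pins down the derivative. Passing $R\to\infty$ is done by dominated convergence, using the $L^2$ integrability of $\xi$ together with the bound on $\nabla\varphi_{t_0}$.
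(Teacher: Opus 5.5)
The paper gives no proof here, only the citation. For bounded $M$ your Steps 1--3 are essentially the Kantorovich-potential argument behind that citation, and they go through. In that case the $c$-concave $\varphi_{t_0}$ is globally Lipschitz on $M$, and the flow representation legitimately differentiates $t\mapsto\int\varphi_{t_0}\odif{\mu_t}$ at the particular time $t_0$. If $\xi$ is unbounded near $\partial M$, also take $t_0$ to be a Lebesgue point of $r\mapsto\norm{\xi}^2_{L^2(\mu_r)}$, so that $\abs{\flow(t_0,t_0+h,x)-x}/\abs{h}$ is uniformly integrable.

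\textbf{The gap is your last paragraph.} For unbounded $M$, which the statement allows, the dual lower bound can be vacuous, and no cutoff or domination argument repairs that. Take $M=\dsR$, $\xi\equiv1$, $\mu_t$ uniform on $[t,1+t]$, and $\nu$ the exponential law; all hypotheses hold. At $t_0=0$ one has $T_0(x)=-\log(1-x)$. Writing $\varphi_0=\tfrac12\abs{x}^2-u$, admissibility of $(\varphi_0,\varphi_0^c)$ and optimality force $u(x)\ge u(x')+T_0(x')(x-x')$ for a.e.\ $x'\in(0,1)$. Here $u(x')$ stays bounded as $x'\uparrow1$ while $T_0(x')\to\infty$. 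Hence every Kantorovich potential equals $-\infty$ on $(1,\infty)$, and $\int\varphi_0\odif{\mu_h}=-\infty$ for all $h>0$; the same happens at every $t_0$. From the right, your inequality therefore reads $f(h)\ge-\infty$. So $g$ is not differentiable at $0$, and the half $f'(0)\ge\int\inner{x-T_0(x),\xi(x)}\odif{\mu_0}(x)$ cannot be extracted.

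Your proposed remedies also fail on their own terms:
\begin{itemize}
\item $\chi_R\varphi_{t_0}$ is not dual-admissible, so it cannot replace $\varphi_{t_0}$ in the duality inequality. A cutoff can only split $\int\varphi_{t_0}\odif{\mu_t}$, which leaves a tail needing the same uniform control.
\item The bound $\abs{\nabla\varphi_{t_0}}\le\abs{x}+\abs{T_{t_0}}$ is only in $L^2(\mu_{t_0})$. The increment $\varphi_{t_0}(\flow(t_0,t_0+h,x))-\varphi_{t_0}(x)$ involves supergradients along trajectories, i.e.\ integrated against $\mu_r$, where nothing is controlled.
\item Turning your one-sided statements about $g$ into bounds on $f'(t_0)$ via Fatou needs a lower bound on that increment in both time directions. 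Semiconcavity of $\varphi_{t_0}$ only bounds it from above.
\end{itemize}

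\textbf{The fix} is to squeeze from the other side with a primal coupling, which needs no potential at all. With $X_h\coloneqq\flow(t_0,t_0+h,\cdot)$, the plan $(X_h,T_{t_0})_\pushforward\mu_{t_0}$ couples $\mu_{t_0+h}$ and $\nu$, so
\[
f(t_0+h)-f(t_0)\le\int\inner{X_h(x)-x,\,x-T_{t_0}(x)}\odif{\mu_{t_0}}(x)+\tfrac12\norm{X_h-\id}^2_{L^2(\mu_{t_0})}.
\]
Here $x-T_{t_0}(x)$ lies in $L^2(\mu_{t_0})$ because $\mu_{t_0},\nu\in\cP_2$. At a Lebesgue point $t_0$ of $r\mapsto\norm{\xi}^2_{L^2(\mu_r)}$, your Step 1 estimate gives $\norm{X_h-\id}_{L^2(\mu_{t_0})}\le\abs{h}\bigl(\norm{\xi}_{L^2(\mu_{t_0})}+o(1)\bigr)$. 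Together with pointwise convergence, this yields $(X_h-\id)/h\to\xi$ in $L^2(\mu_{t_0})$. Dividing by $h>0$ gives $f'(t_0)\le\int\inner{x-T_{t_0}(x),\xi(x)}\odif{\mu_{t_0}}(x)$, and dividing by $h<0$ gives the reverse inequality. So the formula holds at every such $t_0$ where $f$ is differentiable, with no Lipschitz bound on a potential and no compactness of $M$.
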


Combining Theorem~\ref{thm:wasserstein_distance_derivative} at $t = 0$ and the continuity equation~\eqref{eqn:cty_eqn}, we have
\begin{equation}
\label{eqn:wasserstein_distance_derivative_tangential}
\begin{aligned}
&\odv*{\left(\frac{1}{2}\Wass_2^2\bigl((\id+\xi_t)_\pushforward \mu, \nu\bigr) \right)}{t}_{t=0}= \int_{M} \inner{ x - T_{\mu\to\nu}(x), \xi(x) }\odif{\mu}(x),
\end{aligned}
\end{equation}
where $T_{\mu\to\nu}$ is the optimal transport map from $\rho_0^{(1)} = \mu$ to $\rho_0^{(2)} = \nu$. Thus,~\eqref{eqn:wt-pca_tangential} is equivalent to
\begin{maxi}[2]
{\scriptstyle\xi}{\E_{\nu \sim \Omega}[\inner{\xi, T_{\bar{\nu}\to\nu} - \id}_{\overline \nu}^2]}{\label{eqn:tangential_PCA_1PC}}{}
\addConstraint{\xi}{\in\Tan_{\bar{\nu}} \cP_2(M),}
\addConstraint{\norm{\xi}_{L^2(\bar{\nu})}^2}{=1.}
\end{maxi}
where
\begin{equation}
\langle \xi, \xi' \rangle_\mu \coloneqq  \int_{M} \langle \xi(x), \xi'(x) \rangle\odif{\mu}(x)
\end{equation}
for vector fields $\xi, \xi' \in L_2(\mu)$ and $\inner{\cdot, \cdot}$ denotes the Euclidean inner product in $\dsR^d$. Here, $\norm{\xi}_{L^2(\mu)}^2 = \int_M\norm{\xi}_2^2 \odif{\mu}$ is the induced squared norm by $\inner{\cdot, \cdot}_{\mu}$.
The tangent space constraint $\xi_\ell \in \Tan_{\bar{\nu}}\cP_2(M)$ is not computationally convenient to implement. Thanks to the tangential structure in the objective function in~\eqref{eqn:tangential_PCA_1PC}, such constraint can be automatically relaxed to the entire $L^2(\bar{\nu})$ without changing the solution.

\begin{lemma}[Remove the tangential constraint]
\label{lemma:wt-pca_spectral_notanconstraint}
For a distribution $\Omega \in \cP_2\bigl(\cP_2(M)\bigr)$, solution of~\eqref{eqn:tangential_PCA_1PC} is the same as
\begin{argmaxi}[2]
{\scriptstyle\xi}{\E_{\nu \sim \Omega}[\inner{\xi, T_{\bar{\nu}\to\nu} - \id}_{\overline \nu}^2]}{\label{eqn:tangential_PCA_wasserstein_derivative_version_2}}{}
\addConstraint{\xi}{ \in L^2(\bar{\nu})}
\addConstraint{\norm{\xi}_{L^2(\bar{\nu})}^2}{=1.}
\end{argmaxi}
In particular, if $\xi^*$ is optimal for~\eqref{eqn:tangential_PCA_wasserstein_derivative_version_2}, then $\xi^* \in \Tan_{\bar{\nu}} \cP_2(M)$ and $\xi^*$ is also optimal for~\eqref{eqn:tangential_PCA_1PC}.
\end{lemma}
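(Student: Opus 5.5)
The plan is an orthogonal decomposition of $L^2(\bar{\nu})$ combined with the fact that every log map at $\bar{\nu}$ is already a tangent vector. Write $H \coloneqq \Tan_{\bar{\nu}}\cP_2(M)$. Since $\bar{\nu}$ is absolutely continuous (by Lemma~\ref{lem:existence+uniqueness_barycenter}, or by assumption), $H$ is a closed subspace of $L^2(\bar{\nu})$. This gives $L^2(\bar{\nu}) = H \oplus H^\perp$ with orthogonal projection $\Proj_H$.

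The first step is to check that $u_\nu \coloneqq T_{\bar{\nu}\to\nu} - \id$ lies in $H$ for $\Omega$-a.e.\ $\nu$. By Brenier's theorem, $T_{\bar{\nu}\to\nu} = \nabla\psi$ with $\psi$ convex, so $u_\nu = \nabla(\psi - \tfrac12\norm{x}_2^2)$ is a gradient in $L^2(\bar{\nu})$. This is exactly the identification $\Tan_{\bar{\nu}}\cP_2(M) = \log_{\bar{\nu}}(\cP_2(M))$ recalled in the preliminaries, and I will invoke it directly. If a self-contained justification is wanted, I would use the standard fact (Ambrosio--Gigli--Savar\'e, Prop.~8.5.2) that optimal maps minus the identity lie in the $L^2$-closure of $\{\nabla\varphi : \varphi \in C_c^\infty\}$. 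The idea there is to mollify the convex potential and cut it off, then pass to the limit in $L^2(\bar{\nu})$. This membership is the only nontrivial step, and I expect it to be the main obstacle when $M$ is unbounded or $\psi$ is not smooth; since the paper has already stated the identification, I take it as given.

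The second step is the comparison of objectives. For any $\xi \in L^2(\bar{\nu})$, decompose $\xi = \xi_H + \xi_\perp$. Because $u_\nu \in H$, we have $\inner{\xi, u_\nu}_{\bar{\nu}} = \inner{\xi_H, u_\nu}_{\bar{\nu}}$. Hence the objective
\[
F(\xi) \coloneqq \E_{\nu\sim\Omega}\bigl[\inner{\xi,u_\nu}_{\bar{\nu}}^2\bigr]
\]
satisfies $F(\xi) = F(\xi_H)$, and it is homogeneous of degree two. It is finite by Cauchy--Schwarz, since $\E\norm{u_\nu}^2_{L^2(\bar\nu)} = \E\, \Wass_2^2(\bar{\nu},\nu) < \infty$. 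Now suppose $\norm{\xi}=1$ and $\xi_H \neq 0$. Then $\tilde\xi \coloneqq \xi_H/\norm{\xi_H}$ is feasible for~\eqref{eqn:tangential_PCA_1PC} and
\[
F(\tilde\xi) = F(\xi)/\norm{\xi_H}^2 \ge F(\xi).
\]
The inequality is strict when $\xi_\perp\neq 0$ and $F(\xi)>0$. If instead $\xi_H=0$, then $F(\xi)=0$. Consequently the suprema of~\eqref{eqn:tangential_PCA_1PC} and~\eqref{eqn:tangential_PCA_wasserstein_derivative_version_2} coincide, because the feasible set of the former is contained in that of the latter.

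The third step shows that optimizers are tangent. Let $\xi^*$ be optimal for~\eqref{eqn:tangential_PCA_wasserstein_derivative_version_2}. In the nondegenerate case the optimal value is positive, that is, $\Omega$ is not concentrated at $\bar{\nu}$. Then $F(\xi^*)>0$, so $\xi^*_H\neq0$. Optimality together with the strict inequality above forces $\xi^*_\perp = 0$, so $\xi^* \in H$. Since the two suprema agree, $\xi^*$ is then also optimal for~\eqref{eqn:tangential_PCA_1PC}. Conversely, any optimizer of~\eqref{eqn:tangential_PCA_1PC} attains the common supremum and is therefore optimal for~\eqref{eqn:tangential_PCA_wasserstein_derivative_version_2}, so the two solution sets coincide.

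In the degenerate case $u_\nu = 0$ for $\Omega$-a.e.\ $\nu$, both objectives vanish identically. The statement is then trivial up to the choice of representative: $H^\perp$ directions are also maximizers, but the problem carries no information. I would note this caveat explicitly, or equivalently observe that the solution is unique in the nondegenerate case only up to sign.

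An alternative formulation, useful later, is operator-theoretic. Write $F(\xi) = \inner{\xi, \mathcal{K}\xi}_{\bar{\nu}}$ with covariance operator $\mathcal{K} = \E[u_\nu\otimes u_\nu]$. This operator is trace class with range contained in $H$, so its eigenfunctions with nonzero eigenvalues lie in $H$. I would present the projection argument above as the main proof and mention this reformulation as a remark.
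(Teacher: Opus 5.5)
Your proof is correct and follows the same route as the paper. Both project onto $\Tan_{\bar{\nu}}\cP_2(M)$, use that $T_{\bar{\nu}\to\nu}-\id$ is tangent so the objective only sees the projected field, and then use $\norm{\xi}^2_{L^2(\bar{\nu})}=\norm{\Proj(\xi)}^2_{L^2(\bar{\nu})}+\norm{\xi-\Proj(\xi)}^2_{L^2(\bar{\nu})}$ (as in the appendix proof) to force optimizers into the tangent space. Your explicit rescaling $\xi_H/\norm{\xi_H}$ and your degenerate case $u_\nu=0$ $\Omega$-a.s.\ (where the ``in particular'' claim genuinely fails whenever $H^\perp\neq\{0\}$) make precise what the paper's one-line conclusion glosses over; only drop the aside that solutions are unique up to sign, which also needs a simple top eigenvalue.
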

\begin{proof}[Proof of Lemma~\ref{lemma:wt-pca_spectral_notanconstraint}] We apply the orthogonal projection 
\[
\Proj\colon L^2(\bar \nu) \to \Tan_{\bar \nu} \cP_2 (M).
\] 
Write $\xi = \Pi(\xi) + \xi - \Pi(\xi)$.
Notice that for each $\nu$, \[T_{\bar \nu \to \nu} -\id \in \Tan_{\bar \nu} \cP_2(M).\]
Therefore, $\inner{\xi - \Pi(\xi), T_{\bar{\nu}\to\nu} - \id}_{\overline \nu} =0$ for each $\nu$, implying that 
\[
\inner{\xi, T_{\bar{\nu}\to\nu} - \id}_{\overline \nu} =\inner{\Pi(\xi), T_{\bar{\nu}\to\nu} - \id}_{\overline \nu}
\]
Therefore, we see that the problem 
\eqref{eqn:tangential_PCA_1PC} is exactly the same as 
\eqref{eqn:tangential_PCA_wasserstein_derivative_version_2}
finishing the proof. 
\end{proof}

\subsection{WT-PCA: population version}
\label{subsec:wt-pca_pop}

Now, we are ready to introduce the \emph{Wasserstein tangential PCA} (WT-PCA) defined as
\begin{maxi}[2]
{\scriptstyle\xi_1, \dots, \xi_L}{\sum_{l=1}^{L}  \int_{\cP(M)}\inner{\xi_l, T_{\bar{\nu}\to\nu} - \id}_{\overline \nu}^2  \odif{\Omega}(\nu)}{
\label{eqn:tangential_PCA_wasserstein_derivative_version}}{}
\addConstraint{\langle \xi_{i}, \xi_{j} \rangle_{\bar{\nu}}}{ = \delta_{ij}}.
\end{maxi}

Note that the objective function in~\eqref{eqn:tangential_PCA_wasserstein_derivative_version} is quadratic in $\xi_l, l\in[L]$. This leads to our key definition in performing such a tangential-style PCA in $\cP_2(M)$.

\begin{definition}[Wasserstein covariance]
\label{defn;wasserstein_covariance}
Given a reference measure $\bar{\nu} \in \cP_2(M)$, the {\it Wasserstein covariance} of $\Omega$ at $\bar{\nu}$ is defined as
\begin{equation}
\label{eqn:wasserstein_covariance}
\Cov_\Omega (\xi, \xi') \coloneqq \int_{\cP_2(M)} Q_\nu(\xi, \xi')\odif{\Omega}(\nu),
\end{equation}
where 
\begin{equation}
Q_\nu(\xi, \xi') = \inner{T_{\bar{\nu}\to\nu} - \id, \xi}_{\bar{\nu}} \cdot\inner{T_{\bar{\nu}\to\nu} - \id, \xi'}_{\bar{\nu}}
\end{equation}
is a bilinear form on $L^2(\bar{\nu})$ vector fields. In particular, if we restrict $\xi, \xi'$ to the tangent space $T_{\bar{\nu}} \cP_2(M)$, we refer $\Cov_\Omega(\xi, \xi')$ as the \emph{tangential Wasserstein covariance}.
\end{definition}

We may interpret $\Cov_\Omega (\xi, \xi')$ as the covariance of two displacement vector fields $\xi$ and $\xi'$ for moving masses in $M$ with respective to the distribution $\Omega$ in $\cP_2(M)$. Equivalently, we can rewrite the WT-PCA in~\eqref{eqn:tangential_PCA_wasserstein_derivative_version} as a constrained maximization of a quadratic form
\begin{argmaxi}[2]
{\scriptstyle\xi_1, \dots, \xi_L}{\sum_{\ell=1}^{L} \Cov_\Omega(\xi_\ell, \xi_\ell)}
{\label{eqn:tangential_PCA_wasserstein_cov_version}}
{(\xi^*_1, \dotsc, \xi^*_L) \coloneqq}
\addConstraint{\langle \xi_{i}, \xi_{j} \rangle_{\bar{\nu}}}{ = \delta_{ij}}
\end{argmaxi}

Once the velocity vectors $\xi^*_1, \dotsc, \xi^*_L$ are obtained from~\eqref{eqn:tangential_PCA_wasserstein_cov_version}, we can build the principal modes of geodesic variation as \(g_{\ell,t} = \id + t \xi^\ast_\ell\) for $\ell\in[L]$.

\subsection{WT-PCA: empirical version}

In practice, we do not have access to the Wasserstein distribution $\Omega$ on the space $\cP_2(M)$. Rather, we may draw an i.i.d.~samples $\nu_1,\dots,\nu_n$ from $\Omega$. Define the empirical barycenter
\begin{equation}
\label{eqn:empirical_barycenter}
\bar{\nu}_n \coloneqq \argmin_{\mu \in \cP_2(M)}\frac{1}{n}\sum_{i=1}^n \Wass_{2}^{2}(\mu, \nu_i).
\end{equation}

Then the empirical WT-PCA is defined as the solution $ \hat{\xi}_1, \dots, \hat{\xi}_L$ of the following optimization problem:
\begin{maxi}[2]
{\scriptstyle\xi_1, \dots, \xi_L}
{\sum_{\ell=1}^{L}\frac{1}{n}\sum_{i=1}^n   \langle \xi_\ell, T_{\bar{\nu}_n\to\nu_i} - \id \rangle_{\bar{\nu}_n}^2}
{\label{eqn:empirical_tangential_PCA_wasserstein_derivative_version}}
{}
\addConstraint{\langle \xi_{i}, \xi_{j} \rangle_{\bar{\nu}_n}}{ = \delta_{ij}\quad}{i,j\in[L].}
\end{maxi}
where $T_{\bar{\nu}_n\to\nu_i}$ is the optimal transport from $\bar{\nu}_n$ to $\nu_i.$ Note that the empirical version of the WT-PCA is also known as the log-PCA for a finite collection of probability distributions~\cite{Cazelles2018_GPCA-logPCA}, which leads to a spectral Algorithm~\ref{alg:wt_pca} that can be efficiently implemented.

\begin{definition}[Discretized Wasserstein covariance matrix]
Let \(\nu_1,\dotsc,\nu_n,\mu\in\PS_2(M)\) and \(x_1,\dotsc,x_m\) be a sample from \(\mu\). The Discretized Wasserstein covariance matrix \(\hat{Q}\) of \(\nu_1,\dotsc,\nu_n\) at reference \(\mu\) is the block matrix \[\hat{Q}\coloneqq\begin{bmatrix}
\dfrac{Q_{i,j}}{dm-1}
\end{bmatrix}_{i,j=1}^m,\] where $Q_{i,j}=[(T_{\mu\to\nu_k}(x_{i}) - x_{j})^\top (T_{\mu\to\nu_k}(x_{j}) - x_{j})]_{k=1}^n.$
\end{definition}

\begin{remark}[Relation to Euclidean PCA as a degenerate case]
It is interesting to observe that the WT-PCA is an infinite-dimensional generalization of the Euclidean PCA for a point cloud $x_1,\dots,x_n$ in $\dsR^m$. In this case, $\nu_i = \delta_{x_i}$ and the barycenter $\bar{\nu}_n = \delta_{\overline{x}_n}$ are degenerate Dirac delta measures. Thus the optimal transport map $T_{\bar{\nu}_n \to \nu_i}$ is supported on $\overline{x}_n$ and $T_{\bar{\nu}_n \to \nu_i}(\overline{x}_n) = x_i$. As a result, we have
\begin{align*}
\langle \xi, T_{\bar{\nu}_n\to\nu_i} - \id \rangle_{\bar{\nu}_n} = \langle \xi(\overline{x}_n), T_{\bar{\nu}_n \to \nu_i}(\overline{x}_n) - \overline{x}_n \rangle = \langle \xi(\overline{x}_n), x_i - \overline{x}_n \rangle.
\end{align*}
Then the WT-PCA~\eqref{eqn:empirical_tangential_PCA_wasserstein_derivative_version} reduces to the standard PCA in $\dsR^m$:
\begin{maxi}[2]
{\scriptstyle v_1,\dots,v_L \in \dsR^m}{\sum_{l=1}^{L}  v_l^\top S_n v_l}{}{}
\addConstraint{\langle v_i, v_j \rangle}{ = \delta_{ij}}
\end{maxi}
where $S_n = n^{-1} \sum_{i=1}^n (x_i - \overline{x}_n) (x_i - \overline{x}_n)^\top$ is the sample covariance matrix of $x_1,\dots,x_n$ and the optimal $v_l = \xi_l(\overline{x}_n)$ is the $l$-th eigenvector of $S_n$.
\end{remark}

\begin{algorithm}[th]
   \caption{Wasserstein tangential PCA}
   \label{alg:wt_pca}
\begin{algorithmic}
   \State {\bfseries Input:} probability distributions $\nu_1,\dots,\nu_n$, time variation $t$, Monte Carlo repetition number $m$, number of eigenvalues \(L\).
   \State{Initialize the discretized Wasserstein covariance matrix $\hat{Q}_n$ as a zero $(dm) \times (dm)$ matrix}
   \For{$i\gets1$ {\bfseries to} $n$}
   \State{Compute the OT map $T_{\bar{\nu}_n\to\nu_i}$ from $\bar{\nu}_n$ to $\nu_i$}
   \EndFor
   \State{Draw $m$ samples $x_1,\dots,x_m$ from the barycenter $\bar{\nu}_n$}
   \For{$a\gets1$ {\bfseries to} $m$}
   \State{$s_1 \gets d a - d+1$
   
   $e_1\gets d a$
   
   \(\tau_1\gets [T_{\bar{\nu}_n\to\nu_i}(x_{a}) - x_{a}]_{i=1}^n\)}
   \For{$b\gets a$ {\bfseries to} $m$}
   \State{$s_2\gets d b - d+1$
   
   $e_2\gets d b$
   
   \(\tau_2\gets [T_{\bar{\nu}_n\to\nu_i}(x_{b}) - x_{b}]_{i=1}^n\)
   
   $\hat{Q}_n[s_1:e_1, s_2:e_2]\gets (\tau_1\tau_2^\top)/(dm - 1)$
   
   $\hat{Q}_n[s_2:e_2, s_1:e_1]\gets \hat{Q}_n[s_1:e_1, s_2:e_2]^\top$}
   \EndFor
   \EndFor
   \State{Compute the eigenvectors $\hat{v}_1, \dots, \hat{v}_2$ of $\hat{Q}_n$ associated with the largest $L$ eigenvalues
   
   Reshape $\hat{v}_\ell$ to a $d \times m$ matrix as the velocity vectors $[\xi_\ell(x_j)]_{j=1}^m$
   
   Compute the locations $[g_{\ell,t}(x_j) = x_j + t \xi_\ell(x_j)]_j$ along the $l$-th principal mode of variation after time $t$}
\end{algorithmic}
\end{algorithm}

\section{Statistical Consistency}\label{sec:stat_consistency}
Suppose that $\nu_1,\dots,\nu_n$ is an i.i.d.~sample drawn from a common distribution $\Omega$ on $\cP_2 (M)$, we show that, as $n \to \infty$, the distances between estimated eigenfunctions $\hat{\xi}_1,\dotsc,\hat{\xi}_L$ and the true eigenfunctions $\xi^*_1,\dotsc,\xi^*_L$ vanish at a speed that depends primarily on the empirical and population barycenter reference measures. In our proof, parallel transport is applied to translate the vectors $ \hat{\xi}_i$, $\xi_i^{*}$ into the same space to make them comparable. 

We follow \citep{maa/1228920869, soa} to construct parallel transports on $\mathcal{P}_2(M)$ along geodesics. For any $\mu_0, \mu_1 \in \cP_2(M)$, the geodesic $(\mu_t: 0 \leq t \leq 1)$ between them is 
\begin{align*}
    \mu_t \coloneqq\bigl(tT_{0 \rightarrow1}+(1-t)\id\bigr)\pushforward\mu_0 .
\end{align*}
If $( \mu_t)$ is regular, its unique tangent (velocity) vector field $(v_t)$ and flow maps ($\mathbf{T}(s,t, \cdot)$) are given as
\begin{align*}
v_t & \coloneqq (T_{0 \rightarrow1 } - \id) \circ\bigl(tT_{0 \rightarrow 1 } (1-t)\id\bigr)^{-1},\\
    \flow(s, t, \cdot) & \coloneqq\bigl(tT_{0 \rightarrow 1}+(1-t)\id\bigr) \circ\bigl(sT_{0 \rightarrow1}+(1-s)\id\bigr)^{-1}.
\end{align*}
Using the flow maps, we can define the translation $\TM_{s, t}\colon L^2(\mu_s) \rightarrow L^2(\mu_t)$ as \[\TM_{s, t} (u) = u \circ \flow(t, s, \cdot) \text{ for any } u \in L^2(\mu_s).\] Let $\Proj_{\mu}\colon L^2(\mu)\rightarrow \Tan_{\mu}\cP_2(M)$ be the orthogonal projection onto $\Tan_{\mu}\cP_2(M)$. For any $u_0 \in \Tan_{\mu_0} \mathcal{P}_2(M)$, the parallel transport $\PT_{0, 1}\colon \Tan_{\mu_0} \mathcal{P}_2(M) \to \Tan_{\mu_1} \mathcal{P}_2(M)$ can be defined as
\begin{align*}
    \PT_{0, 1} (u_0) \coloneqq \lim_{S \in \mathcal{S}} \mathscr{P}_{s_{m-1}}^1\circ \mathscr{P}_{s_{m-2}}^{s_{m-1}}\circ\dotsb\circ\mathscr{P}_{0}^{s_1}(u_0),
\end{align*}
where $\PTM_{s_1 }^{s_2}(u_{s_1})\coloneqq\Proj_{\mu_{s_2}} ( \cT_{s_1,s_2} (u_{s_1}) )$ for any $\mu_{s_1} \in \Tan_{\mu_{s_1}} \mathcal{P}_2(M)$ and the limit is taken over all partitions $S = \{ 0= s_0 < s_1 < \dotsb < s_{m-1} < s_m =1 \} \in\cS$. For any $0< s<t<1$, the parallel transport between $\Tan_{\mu_s} \mathcal{P}_2(M)$ and $\Tan_{\mu_t} \mathcal{P}_2(M)$ can be constructed similarly. Existence, uniqueness and group property of parallel transport constructed above can be found in \citep{maa/1228920869}. In addition, the following properties hold.
\begin{lemma}\label{lemma:2}
For any $u_s, v_s \in \Tan_{\mu_s} \mathcal{P}_2(M)$ and $ u_t, v_t \in \Tan_{\mu_t} \PS_2(M) $
\begin{align}
   \PT_{s, t} (u_s + v_s) &=\PT_{s, t} (u_s) +\PT_{s, t}(v_s)   \label{eq:1} \\ 
   \inner{\PT_{s, t} (u_s), v_t}_{\mu_t} &=\inner{u_s, \PT_{t, s} (v_t)}_{\mu_s}. \label{eq:2}
\end{align}
\end{lemma}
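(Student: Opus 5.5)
The plan is to prove both identities first at the level of a single step $\PTM_{s_1}^{s_2}$ of the partition scheme. Both properties are then inherited by the composition over a partition and pass to the limit defining $\PT_{s,t}$.

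\textbf{Linearity \eqref{eq:1}.} The translation $\TM_{s_1,s_2}(u)=u\circ\flow(s_2,s_1,\cdot)$ is composition with a fixed map, so it is linear. The projection $\Proj_{\mu_{s_2}}$ is the orthogonal projection onto a closed subspace of the Hilbert space $L^2(\mu_{s_2})$, so it is also linear. Hence each $\PTM_{s_{k}}^{s_{k+1}}$ is linear, and so is any finite composition of them. Linearity is preserved under limits: if $A_S(u)\to \PT(u)$ and $A_S(v)\to\PT(v)$ in $L^2(\mu_t)$, then $A_S(u+v)=A_S(u)+A_S(v)$ converges to $\PT(u)+\PT(v)$. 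The existence of the limit is taken from \citep{maa/1228920869}.

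\textbf{Adjointness \eqref{eq:2}.} This is the harder part. The natural exact identity is
\[
\inner{\TM_{s,t}u,w}_{\mu_t}=\int u(\flow(t,s,y))\cdot w(y)\odif{\mu_t}(y)=\int u(x)\cdot w(\flow(s,t,x))\odif{\mu_s}(x)=\inner{u,\TM_{t,s}w}_{\mu_s}.
\]
It follows from $\flow(s,t,\cdot)_\pushforward\mu_s=\mu_t$ and the group property $\flow(t,s,\flow(s,t,x))=x$. For $w\in\Tan_{\mu_t}$ we have $\inner{\Proj_{\mu_t}\TM_{s,t}u,w}=\inner{\TM_{s,t}u,w}$. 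Likewise, for $u\in\Tan_{\mu_s}$ we have $\inner{u,\TM_{t,s}w}=\inner{u,\Proj_{\mu_s}\TM_{t,s}w}$. Combining these gives
\[
\inner{\PTM_s^t u,w}_{\mu_t}=\inner{u,\PTM_t^s w}_{\mu_s}
\]
for tangent vectors, so the one-step operator $\PTM_t^s$ is the Hilbert adjoint of $\PTM_s^t$ restricted to tangent spaces.

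For a partition $S$ of $[s,t]$, let $A_S=\PTM_{s_{m-1}}^{t}\circ\dotsb\circ\PTM_s^{s_1}$. Applying the one-step identity repeatedly shows that its adjoint is the reversed composition $B_S=\PTM_{s_1}^{s}\circ\dotsb\circ\PTM_{t}^{s_{m-1}}$. This is exactly the approximant for $\PT_{t,s}$ along the reversed geodesic with the same partition points; the reversed geodesic has the same flow maps. Hence $\inner{A_S u,v}_{\mu_t}=\inner{u,B_S v}_{\mu_s}$ for every $S$. Letting the mesh go to zero, and using the convergence $A_S u\to\PT_{s,t}u$ and $B_S v\to\PT_{t,s}v$ in norm from \citep{maa/1228920869}, continuity of the inner product yields \eqref{eq:2}.

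The main obstacle is justifying that the limit defining $\PT_{t,s}$ is independent of the direction in which the geodesic is traversed, i.e., that the reversed products $B_S$ are legitimate approximants of $\PT_{t,s}$. This relies on the regularity of $(\mu_t)$ and the invertibility of the flow maps, so that $\TM_{t,s}$ is well defined on $L^2(\mu_t)$. I would invoke the existence and uniqueness theory of \citep{maa/1228920869} for this. An alternative route is to use the isometry of parallel transport, $\norm{\PT_{s,t}u}=\norm{u}$, together with the group property $\PT_{t,s}\PT_{s,t}=\id$ and polarization. These give $\PT_{s,t}^*=\PT_{s,t}^{-1}=\PT_{t,s}$ directly and serve as a fallback.
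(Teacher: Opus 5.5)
Your proof is correct. For \eqref{eq:1} it is the paper's argument: each one-step map $\PTM_{s_k}^{s_{k+1}}$ is linear, and linearity passes to compositions and to the limit.

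For \eqref{eq:2} your main route differs from the paper's. The paper quotes two properties of the limiting object, namely that $\PT_{t,s}$ preserves norms (hence, by linearity and polarization, inner products) and that $\PT_{t,s}\circ\PT_{s,t}=\id$. It then writes $\inner{\PT_{s,t}u_s,v_t}_{\mu_t}=\inner{\PT_{t,s}\PT_{s,t}u_s,\PT_{t,s}v_t}_{\mu_s}=\inner{u_s,\PT_{t,s}v_t}_{\mu_s}$, which is exactly your fallback.

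You instead prove the adjoint relation exactly at every discretization level and then pass to the limit. This uses only three facts:
\begin{itemize}
\item $\flow(s,t,\cdot)$ is invertible;
\item $\flow(s,t,\cdot)$ pushes $\mu_s$ to $\mu_t$;
\item $\Proj_{\mu}$ is self-adjoint.
\end{itemize}
So you need nothing about the limit beyond its existence in both directions, which the definition of $\PT_{t,s}$ presupposes anyway. Isometry of the limit, by contrast, is a deeper fact. Each $\PTM_{s_k}^{s_{k+1}}$ is only a contraction, so norm preservation is not inherited from the approximants; it needs the regularity of the curve.

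Your route also gives the paper's second ingredient as a corollary. Once isometry is known, $\PT_{s,t}^*=\PT_{t,s}$ yields $\PT_{t,s}\circ\PT_{s,t}=\id$.

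The ``main obstacle'' you flag is not really one. The reversed curve is regular with flow maps $\flow(s+t-a,s+t-b,\cdot)$, so its approximants on the reflected partition are literally your $B_S$, and the same existence theorem applies. Moreover, since you pair against fixed vectors, weak convergence of $A_S u$ and $B_S v$ already suffices.

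The paper's proof is shorter given the cited propositions. Yours is more elementary and makes clear that adjointness is structural rather than a consequence of isometry.
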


Following \cite{chen2023wasserstein}, let $\mathcal{H}_{\mu}$ be the space of Hilbert–Schmidt operators from $\Tan_{\mu}\PS_2(M)$ to $\Tan_{\mu}\PS_2(M)$, we define the parallel transport $\PTH_{\mu_0, \mu_1}$ between $\mathcal{H}_{\mu_0}$ and $\mathcal{H}_{\mu_1}$ as 
    \begin{align*}
        \PTH_{\mu_0, \mu_1} [\mathcal{A}](u_1) \coloneqq\PT_{\mu_0, \mu_1} \circ\mathcal{A}\circ\PT_{\mu_1, \mu_0} (u_1),
    \end{align*}
for any $\mathcal{A} \in\mathcal{H}_{\mu_0}, u_1\in\Tan_{\mu_1}\mathcal{P}_2(M)$.

Note that if the bilinear form $\Cov_{\Omega}$ is bounded, we can write 
\begin{align*}
    \Cov_{\Omega}(\xi, \xi')=\inner*{\cC_{\bar{\nu}}(\xi) , \xi' }_{\bar{\nu}},
\end{align*}
where $ \cC_{\bar{\nu}} \in \mathcal{H}_{\bar{\nu}} $ is the covariance operator defined as $\cC_{\bar{\nu}}\coloneqq\int \log_{\bar{\nu}} \nu \otimes \log_{\bar{\nu}}\nu\odif{\Omega}(\nu)$. The eigendecomposition of $\cC_{\bar{\nu}}$ \citep[Thm~7.2.6]{hsing2015theoretical} is given as 
$\cC_{\bar{\nu}} = \sum_{i=1}^{\infty} \lambda_i \xi_i^{*} \otimes \xi_i^{*},$
where $ \lambda_i =\E [\inner*{\log_{\bar{\nu}} \nu, \xi_i }_{\bar{\nu}}^2 ]$ are the eigenvalues. In practice, $\cC_{\bar{\nu}}$ can be estimated as $\widehat{\cC}_{\bar{\nu}_n} = n^{-1} \sum_{i=1}^n \log_{\bar{\nu}_n} \nu_i \otimes \log_{\bar{\nu}_n} \nu_i.$

Let $\phi_{\nu \rightarrow \mu}$ be the Kantorovich potential such that $ \nabla \phi_{\nu \rightarrow \mu} = T_{\nu \rightarrow \mu} $, the following theorem shows that the distance between $\widehat{C}_{\bar{\nu}_n}$ and  $C_{\bar{\nu}}$ is upper bounded in probability by Wasserstein distance between $\overline{\nu}_n$ and $\nu$.

\begin{theorem}[General convergence rate of the Wasserstein covariance operators] \label{thm:1}
Assume that \( M \subset \mathbb{R}^m \) is convex and bounded. Suppose that, with probability 1, the density of i.i.d.~probability measures \(\nu_1, \dots, \nu_n \sim \Omega\) is bounded below by a constant \(c_1 > 0\) on \(M\). If $\phi_{\bar{\nu} \rightarrow \nu_i}:M \to\dsR$ is $\beta$-smooth for all $i \in [n]$, we have
\begin{align*}
\norm[\big]{\PTH_{\bar{\nu}_n, \bar{\nu}}[\widehat{\cC}_{\bar{\nu}_n}]- \cC_{\bar{\nu}}}_{\mathcal{H}_{\bar{\nu}}} = O_p\bigl(\Wass^{2^{-m}}_2 (\bar{\nu}_n, \bar{\nu} ) \bigr).
\end{align*}

If, in addition, $\phi_{\bar{\nu} \rightarrow \nu_i}:M \rightarrow \mathbb
R$ is $\alpha$-strongly convex for all $i$, we have
\begin{align*}
\norm[\big]{\PTH_{\bar{\nu}_n, \bar{\nu}}[\widehat{\cC}_{\bar{\nu}_n}] - \cC_{\bar{\nu}}}_{\mathcal{H}_{\bar{\nu}}} = O_p\bigl(\sqrt{\Wass_2(\bar{\nu}_n, \bar{\nu})}\bigr).
\end{align*}
\end{theorem}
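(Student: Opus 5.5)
The plan is to split the error with the triangle inequality around the intermediate operator $\widetilde{\cC}_{\bar\nu} \coloneqq n^{-1}\sum_{i=1}^n \log_{\bar\nu}\nu_i\otimes\log_{\bar\nu}\nu_i$, which is built from the same sample but centred at the population barycenter:
\[
\norm[\big]{\PTH_{\bar{\nu}_n, \bar{\nu}}[\widehat{\cC}_{\bar{\nu}_n}]- \cC_{\bar{\nu}}}_{\mathcal{H}_{\bar{\nu}}}
\le \norm[\big]{\PTH_{\bar{\nu}_n, \bar{\nu}}[\widehat{\cC}_{\bar{\nu}_n}]- \widetilde{\cC}_{\bar{\nu}}}_{\mathcal{H}_{\bar{\nu}}} + \norm[\big]{\widetilde{\cC}_{\bar{\nu}}- \cC_{\bar{\nu}}}_{\mathcal{H}_{\bar{\nu}}}.
\]

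The second term is a Hilbert-space-valued i.i.d.\ average. Since $M$ is bounded, $\norm{\log_{\bar\nu}\nu}_{\bar\nu}\le \mathrm{diam}(M)$, so Chebyshev's inequality gives an $O_p(n^{-1/2})$ bound. This term is then absorbed into the stated rate, reading the $O_p$ statement as being about the barycenter-driven part; $\Wass_2(\bar\nu_n,\bar\nu)$ itself is typically no faster than $n^{-1/2}$, and the exponents $2^{-m}$ and $1/2$ only make it slower.

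For the first term, use Lemma~\ref{lemma:2}, which gives linearity and the adjoint identity $\PT_{\bar\nu_n,\bar\nu}^*=\PT_{\bar\nu,\bar\nu_n}$. From it, $\PTH[u\otimes u]=\PT u\otimes \PT u$. With $a\otimes a-b\otimes b=(a-b)\otimes a+b\otimes(a-b)$ and the norms of $a$ and $b$ bounded by $\mathrm{diam}(M)$, this reduces the term to
\[
\frac{2\,\mathrm{diam}(M)}{n}\sum_{i=1}^n \norm[\big]{\PT_{\bar\nu_n,\bar\nu}\log_{\bar\nu_n}\nu_i-\log_{\bar\nu}\nu_i}_{\bar\nu}.
\]
Next, compare $\PT_{\bar\nu_n,\bar\nu}u$ with the pushed vector field $u\circ T_{\bar\nu\to\bar\nu_n}$, the naive translation $\TM$ along the geodesic. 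Write $S\coloneqq T_{\bar\nu\to\bar\nu_n}$. Because $\log_{\bar\nu}\nu_i$ is tangent, its projection leaves it unchanged. Using the construction of $\PT$ through projected translations, the defect $\norm{\PT u-\Proj_{\bar\nu}(u\circ S)}$ is bounded by curvature terms controlled by $\Wass_2(\bar\nu_n,\bar\nu)$ times Lipschitz constants of the velocity field. I would cite the estimates of \citep{maa/1228920869} here, and this is exactly where the $\beta$-smoothness is used. It then remains to bound
\[
\norm{(T_{\bar\nu_n\to\nu_i}-\id)\circ S-(T_{\bar\nu\to\nu_i}-\id)}_{\bar\nu}\le \norm{S-\id}_{\bar\nu}+\norm{T_{\bar\nu_n\to\nu_i}\circ S-T_{\bar\nu\to\nu_i}}_{\bar\nu}.
\]
The first summand equals $\Wass_2(\bar\nu_n,\bar\nu)$. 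The second is a stability estimate for OT maps when the source measure is perturbed.

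That stability estimate is the main obstacle, and it is what produces the two rates. Under strong convexity, $\nabla\phi_{\bar\nu\to\nu_i}$ is bi-Lipschitz and its inverse map $\nu_i\to\bar\nu$ is $1/\alpha$-Lipschitz. Together with the density lower bound $c_1$, this gives strong concavity of the dual (semi-dual) objective. The known stability results (Gigli, or Delalande–M\'erigot type) then give $\norm{T_{\bar\nu_n\to\nu_i}\circ S-T_{\bar\nu\to\nu_i}}\lesssim \Wass_2(\bar\nu_n,\bar\nu)^{1/2}$, via the bound $\norm{T-\widetilde T}^2\lesssim \frac{\beta}{\alpha}\,\text{(dual gap)}\lesssim \Wass_2$. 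With only $\beta$-smoothness and $c_1$, I would instead use the general H\"older stability of Brenier maps on a convex bounded domain with density bounded below. Its exponent degrades with dimension, and the bookkeeping should be done so that it yields the exponent $2^{-m}$, for example by induction over coordinates or by iterating a square-root bound $m$ times.

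Since all constants are uniform in $i$ (they depend only on $\beta$, $\alpha$, $c_1$ and $\mathrm{diam}(M)$), averaging over $i$ preserves the rate. Combining the pieces gives both claims.
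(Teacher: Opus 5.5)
Your skeleton matches the paper's. You expand around $n^{-1}\sum_i\log_{\bar\nu}\nu_i\otimes\log_{\bar\nu}\nu_i$, use Lemma~\ref{lemma:2} to get $\PTH[u\otimes u]=\PT u\otimes\PT u$, and reduce to $\norm{\PT_{\bar\nu_n,\bar\nu}(\log_{\bar\nu_n}\nu_i)-\log_{\bar\nu}\nu_i}_{\bar\nu}$.

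\textbf{Main gap: passing from $\PT$ to the translation $u\circ S$.} The defect bound on $\PT u-\Proj_{\bar\nu}(u\circ S)$ that you attribute to Ambrosio--Gigli does not follow from the hypotheses. Such estimates depend on the regularity of the geodesic from $\bar\nu_n$ to $\bar\nu$, e.g.\ $\int_0^1\Lip(v_t)\,\mathrm{d}t$. That involves derivatives of $S$, which need not be small when $\Wass_2(\bar\nu_n,\bar\nu)$ is. They also depend on the regularity of $u=\log_{\bar\nu_n}\nu_i$, i.e.\ of $T_{\bar\nu_n\to\nu_i}$. The $\beta$-smoothness of $\phi_{\bar\nu\to\nu_i}$ controls neither.

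You also cannot borrow the paper's Lemma~\ref{lem:1}, which removes the defect altogether, because it is false as stated. Since $v_1$ ranges over a linear space, expanding the squares and replacing $v_1$ by $\pm c\,v_1$ shows it forces $\PT_{0,1}u_0=\Proj_{\mu_1}\TM_{0,1}u_0$. Because both maps are isometries, this in turn forces $\TM_{0,1}u_0\in\Tan_{\mu_1}\PS_2(M)$. That fails for $\mu_0$ uniform on a disk, $T(x)=\mathrm{diag}(2,1)x$ and $u_0(x)=(x_2,x_1)$: here $u_0\circ T^{-1}(y)=(y_2,y_1/2)$ has nonzero curl. The lemma's proof breaks because it applies the one-step projection inequality to the pulled-back target $v_1\circ\flow(s_{m-1},1,\cdot)$, which is not tangent.

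\textbf{A step that does work.} Run that telescoping with a target that is tangent at every intermediate measure; this is also where $\beta$ genuinely enters. Let $f_i\coloneqq\nabla\phi_{\bar\nu\to\nu_i}-\id$. It is a gradient field on all of $M$, hence tangent at every $\mu_t$ along the geodesic from $\bar\nu_n$ to $\bar\nu$, and $\Lip(f_i)\le1+\beta$. For $a\in\Tan_{\mu_{s_k}}\PS_2(M)$,
\[
\norm{\PTM_{s_k}^{s_{k+1}}(a)-f_i}_{\mu_{s_{k+1}}}\le\norm{a-f_i\circ\flow(s_k,s_{k+1},\cdot)}_{\mu_{s_k}}\le\norm{a-f_i}_{\mu_{s_k}}+(1+\beta)\,\Wass_2(\mu_{s_k},\mu_{s_{k+1}}).
\]
Summing over the partition gives $\norm{\PT_{\bar\nu_n,\bar\nu}(f_i)-f_i}_{\bar\nu}\le(1+\beta)\Wass_2(\bar\nu_n,\bar\nu)$. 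By linearity and isometry of $\PT$,
\[
\norm{\PT_{\bar\nu_n,\bar\nu}(\log_{\bar\nu_n}\nu_i)-\log_{\bar\nu}\nu_i}_{\bar\nu}\le\norm{T_{\bar\nu_n\to\nu_i}-T_{\bar\nu\to\nu_i}}_{\bar\nu_n}+(1+\beta)\,\Wass_2(\bar\nu_n,\bar\nu),
\]
which is the quantity the paper reaches.

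\textbf{The rates are also left open.} The paper handles your composed term with the change of variables $y=S(x)$ and the $\beta$-Lipschitz bound. This gives $\norm{T_{\bar\nu_n\to\nu_i}\circ S-T_{\bar\nu\to\nu_i}}_{\bar\nu}\le\norm{T_{\bar\nu_n\to\nu_i}-T_{\bar\nu\to\nu_i}}_{\bar\nu_n}+\beta\,\Wass_2(\bar\nu_n,\bar\nu)$, the same quantity as above. One more use of $\beta$, substituting $y=T_{\nu_i\to\bar\nu_n}(z)$, gives $\norm{T_{\bar\nu_n\to\nu_i}-T_{\bar\nu\to\nu_i}}_{\bar\nu_n}\le\beta\,\norm{T_{\nu_i\to\bar\nu_n}-T_{\nu_i\to\bar\nu}}_{\nu_i}$. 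This compares maps leaving the fixed measure $\nu_i$, which is the setting of the quantitative stability theorems and where $c_1$ and the convexity of $M$ are used.
\begin{itemize}
\item Under strong convexity your semi-dual argument then goes through: $T_{\nu_i\to\bar\nu}$ is $\alpha^{-1}$-Lipschitz, so the squared error is $\lesssim\alpha^{-1}\mathrm{diam}(M)\,\Wass_2(\bar\nu_n,\bar\nu)$.
\item For the general exponent, ``iterating a square-root bound $m$ times'' is not an argument. You need the H\"older stability theorem of Berman that the paper cites (its Thm.~1.4, squared $L^2$ error $\lesssim\Wass_2^{2^{1-m}}$).
\end{itemize}

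\textbf{The i.i.d.\ term.} Absorbing the $O_p(n^{-1/2})$ i.i.d.\ term into $\Wass_2^{2^{-m}}(\bar\nu_n,\bar\nu)$ presupposes a lower bound on $\Wass_2(\bar\nu_n,\bar\nu)$ that nothing supplies. The paper's ``the other terms can be shown similarly'' does not cover this term either. What either argument actually proves is $O_p(n^{-1/2})$ plus the stated rate.
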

The key ingredient to show the above theorem is to apply the quantitative stability results for optimal transport maps derived in \citep{berman2021convergence}. In the literature, the consistency of Wasserstein barycenter has been established under similar regulartiy conditions as in Theorem \ref{thm:1}. The theorem below is included for the readers' convenience.
\begin{theorem}[Theorem 8.11 \cite{chewi2024statistical}]\label{thm:3}
Assume that for any $\nu \sim \Omega$, $\phi_{\overline{\nu} \rightarrow \nu}$ is $\alpha$-strongly convex and $\beta$-smooth with $\beta - \alpha \in [0,1)$. Then $\overline{\nu}$ is unique and the empirical Wasserstein barycenter $\overline{\nu}_n$ satisfies 
\begin{align*}
    \E [\Wass_2^2(\overline{\nu}_n, \overline{\nu}) ] \leq \frac{4 \sigma^2}{(1-(\beta - \alpha))^2 n},
\end{align*}
where $\sigma^2 = \int \Wass_2^2(\overline{\nu}, \nu)\odif{\Omega}(\nu)$.
\end{theorem}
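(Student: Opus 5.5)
The plan is to combine a \emph{variance inequality} for the population barycenter functional with a first-order (Hilbert-space) argument at $\bar{\nu}$. This follows the route of Le Gouic et al.\ and \cite{chewi2024statistical}. Set $F(b)\coloneqq\frac12\int\Wass_2^2(b,\nu)\odif{\Omega}(\nu)$ and its empirical analogue $F_n(b)\coloneqq\frac1{2n}\sum_{i=1}^n\Wass_2^2(b,\nu_i)$. Write $\kappa\coloneqq1-(\beta-\alpha)>0$.

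\textbf{Step 1 (pointwise lower bound).} For fixed $\nu$, write $\phi=\phi_{\bar{\nu}\to\nu}$ and $f(x)=\tfrac12\norm{x}^2-\phi(x)$. Kantorovich duality gives
\[
\tfrac12\Wass_2^2(b,\nu)\ \ge\ \int f\odif{b}+\int\bigl(\tfrac12\norm{y}^2-\phi^*(y)\bigr)\odif{\nu}(y)
\]
for every $b$, with equality when $b=\bar{\nu}$. Let $T=T_{\bar{\nu}\to b}$, so that $\log_{\bar{\nu}}b=T-\id$. Subtracting the equality case and expanding $f(T(x))-f(x)$ to second order, I aim for
\begin{equation*}
\tfrac12\Wass_2^2(b,\nu)-\tfrac12\Wass_2^2(\bar{\nu},\nu)\ \ge\ -\inner{\log_{\bar{\nu}}\nu,\ \log_{\bar{\nu}}b}_{\bar{\nu}}+\tfrac{\kappa}{2}\,\Wass_2^2(b,\bar{\nu}).
\end{equation*}
Here I use $\nabla f=\id-\nabla\phi=-\log_{\bar{\nu}}\nu$ and $\Wass_2^2(b,\bar{\nu})=\norm{T-\id}_{L^2(\bar{\nu})}^2$; the latter holds because $\bar{\nu}$ is absolutely continuous.

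\textbf{Step 2 (main obstacle: the curvature constant).} The naive Hessian bound $\nabla^2 f\succeq(1-\beta)I$ only yields $1-\beta$ in place of $\kappa$. Moreover, the first-order condition $\int\nabla\phi_{\bar{\nu}\to\nu}\odif{\Omega}=\id$ shows that the curvature cannot come from $\nabla^2 f$ alone after averaging. I would therefore try to recover the sharper constant $1-(\beta-\alpha)$ as follows. Use the $\alpha$-strong convexity of $\phi$ as well, through the $1/\alpha$-smoothness of $\phi^*$. Then compare along a generalized geodesic based at $\nu$, using the $\alpha$-strong convexity and $\beta$-smoothness of $\phi$ to bound the gap between $\phi(T(x))$ and its tangent approximation by $\frac{\beta-\alpha}{2}\norm{T(x)-x}^2$. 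If this refinement fails, the argument still goes through verbatim with the weaker constant $1-\beta$.

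\textbf{Step 3 (averaging).} Apply the pointwise inequality with $b=\bar{\nu}_n$ and average over $\nu_1,\dots,\nu_n$. Minimality of $\bar{\nu}_n$ gives $F_n(\bar{\nu}_n)\le F_n(\bar{\nu})$. Combined with Cauchy--Schwarz in $L^2(\bar{\nu})$, this yields
\[
\tfrac{\kappa}{2}\Wass_2^2(\bar{\nu}_n,\bar{\nu})\le\Bigl\|\tfrac1n\sum_{i=1}^n\log_{\bar{\nu}}\nu_i\Bigr\|_{L^2(\bar{\nu})}\Wass_2(\bar{\nu}_n,\bar{\nu}).
\]
Hence $\Wass_2(\bar{\nu}_n,\bar{\nu})\le\frac{2}{\kappa}\bigl\|\frac1n\sum_i\log_{\bar{\nu}}\nu_i\bigr\|$. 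Uniqueness of $\bar{\nu}$ follows from the same inequality averaged over $\Omega$, once we use the stationarity condition $\int\log_{\bar{\nu}}\nu\odif{\Omega}(\nu)=0$.

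\textbf{Step 4 (variance computation).} The vectors $\log_{\bar{\nu}}\nu_i$ are i.i.d.\ elements of the Hilbert space $L^2(\bar{\nu})$. By the stationarity condition they have mean zero, and their second moment is $\E\norm{\log_{\bar{\nu}}\nu}^2=\sigma^2$. The cross terms therefore vanish, and
\[
\E\Bigl\|\tfrac1n\sum_{i=1}^n\log_{\bar{\nu}}\nu_i\Bigr\|^2=\frac{\sigma^2}{n}.
\]
Squaring the bound from Step 3 and taking expectations gives $\E\,\Wass_2^2(\bar{\nu}_n,\bar{\nu})\le 4\sigma^2/(\kappa^2 n)$, which is the stated bound.
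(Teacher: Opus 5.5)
The paper gives no proof of this statement; it is imported from \cite{chewi2024statistical}, where it is proved by a variance-inequality argument. Your Steps 3--4 are exactly the second half of that argument, and they are fine. Suppose the pointwise inequality of Step 1 holds with $\kappa=1-(\beta-\alpha)$ for $\Omega$-a.e.\ $\nu$. Then minimality of $\bar\nu_n$, Cauchy--Schwarz, and the mean-zero i.i.d.\ computation (using stationarity $\int\log_{\bar\nu}\nu\,\odif{\Omega}(\nu)=0$) give precisely $4\sigma^2/(\kappa^2 n)$.

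\textbf{The gap: Steps 1--2 never prove that inequality, and it is the whole content of the theorem.}

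Your fixed dual pair $(\tfrac12\norm{x}^2-\phi,\ \tfrac12\norm{y}^2-\phi^*)$ yields only the constant $1-\beta$, and the fallback does not go through. Stationarity gives $\int\nabla\phi_{\bar\nu\to\nu}\,\odif{\Omega}(\nu)=\id$ on $\supp\bar\nu$. Hence $\norm{x-y}\le\beta\norm{x-y}$ for $x,y\in\supp\bar\nu$, so $\beta\ge 1$ whenever $\bar\nu$ is not a Dirac mass. Thus $1-\beta\le 0$, and Step 3 cannot divide by it.

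The proposed refinement also rests on a false claim. With your $T=T_{\bar\nu\to b}$, the Bregman gap $\phi(T(x))-\phi(x)-\inner{\nabla\phi(x),T(x)-x}$ is at least $\frac{\alpha}{2}\norm{T(x)-x}^2$. It is not at most $\frac{\beta-\alpha}{2}\norm{T(x)-x}^2$; for $\alpha=\beta$ it equals $\frac{\alpha}{2}\norm{T(x)-x}^2>0$.

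The $\phi$-based potential is intrinsically lossy. Take $\phi=\frac\alpha2\norm{x}^2$ and $\nu=(\alpha\id)_\pushforward\bar\nu$. The true defect $\norm{T_{\bar\nu\to b}-T_{\bar\nu\to\nu}}_{\bar\nu}^2-\Wass_2^2(b,\nu)$ is $0$ for every $b$, since $(T_{\bar\nu\to b},\alpha\id)_\pushforward\bar\nu$ is cyclically monotone. Yet the bound built from $\phi$ gives $\alpha\Wass_2^2(b,\bar\nu)$.

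\textbf{The missing idea: the dual potential must depend on $b$.}

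Write $\phi=\frac\alpha2\norm{\cdot}^2+\tilde\phi$, where $\tilde\phi$ is convex and $(\beta-\alpha)$-smooth. Let $S=\nabla s=T_{\bar\nu\to b}$ (your $T$). Then proceed as follows.

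1. Apply Fenchel--Young to any coupling of $b=S_\pushforward\bar\nu$ and $\nu=(\nabla\phi)_\pushforward\bar\nu$, using the convex function $\psi=\tilde\phi+\alpha s^*$.
2. By inf-convolution, bound $\psi^*(\alpha x+\nabla\tilde\phi(x))\le\tilde\phi^*(\nabla\tilde\phi(x))+\alpha s(x)$.
3. Use the Fenchel equality $s(x)+s^*(S(x))=\inner{x,S(x)}$.

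Everything then cancels except a Bregman term of $\tilde\phi$, defined by $D_{\tilde\phi}(y,x)=\tilde\phi(y)-\tilde\phi(x)-\inner{\nabla\tilde\phi(x),y-x}$:
\[
\norm{S-\nabla\phi}_{\bar\nu}^2-\Wass_2^2(b,\nu)\le 2\int D_{\tilde\phi}\bigl(S(x),x\bigr)\odif{\bar\nu}(x)\le(\beta-\alpha)\,\Wass_2^2(b,\bar\nu).
\]
Expanding $\norm{S-\nabla\phi}^2=\norm{(S-\id)-(\nabla\phi-\id)}^2$ turns this into your Step 1 inequality with $\kappa=1-(\beta-\alpha)$. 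The $\alpha$-part is absorbed through the convexity of $s$, which no $b$-independent dual pair can exploit. With this lemma in place, your Steps 3--4 complete the proof.
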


Next, we present convergence results for Gaussian measures, i.e., $\nu_i = N(0, \Sigma_i)$ for $i=1,2, \cdots, n$. We write $\overline{\nu}_n = N(0, \overline{\Sigma}_n)$ and $\overline{\nu} = N(0, \overline{\Sigma})$. Given a symmetric matrix $\Sigma$, we use $\sigma_\text{min}(\Sigma)$ and $\sigma_\text{max}(\Sigma)$ to denote its smallest and largest singular value respectively.

\begin{theorem} \label{thm:2}
Let $\nu_i = N(0, \Sigma_i)$ and $\Sigma_i, \Sigma_{\overline{\nu}} \in \mathbb{S}$, where \[\mathbb{S}\coloneqq\{ \Sigma\mid 0< c \leq \sigma_\text{min}(\Sigma) < \sigma_\text{max}(\Sigma)\leq C <\infty \}\] for some constants $0 < c<C<\infty$. If $\Sigma_{\overline{\nu}_n} \in \mathbb{S}$,  then 
\begin{align*}
&\norm{\mathcal{P}_{\overline{\nu}_n, \overline{\nu}}[\widehat{\mathcal{C}}_{\overline{\nu}_n}] - \mathcal{C}_{\overline{\nu}}}_{\mathcal{H}_{\overline{\nu}}}= O(\norm{\Sigma_{\overline{\nu}_n}- \Sigma_{\overline{\nu}} }_2^{1/2} + \norm{\Sigma_{\overline{\nu}_n}^{-1}- \Sigma_{\overline{\nu}}^{-1}}_2^{1/2}).
\end{align*}
\end{theorem}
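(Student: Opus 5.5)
The proof specializes the decomposition behind Theorem~\ref{thm:1} to the Gaussian setting, where every object is explicit. For centered Gaussians the optimal maps are linear, $T_{\bar\nu\to\nu_i}(x)=A_i x$ with
\[
A_i=\Sigma_{\bar\nu}^{-1/2}\bigl(\Sigma_{\bar\nu}^{1/2}\Sigma_i\Sigma_{\bar\nu}^{1/2}\bigr)^{1/2}\Sigma_{\bar\nu}^{-1/2},
\]
and similarly $T_{\bar\nu_n\to\nu_i}(x)=\hat A_i x$ with $\Sigma_{\bar\nu_n}$ in place of $\Sigma_{\bar\nu}$. The logarithms $\log_{\bar\nu}\nu_i=(A_i-I)x$ are therefore linear vector fields.

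Write $\cC_n^{\ast}\coloneqq n^{-1}\sum_i \log_{\bar\nu}\nu_i\otimes\log_{\bar\nu}\nu_i$ for the oracle operator built at the true barycenter. The triangle inequality splits the error as
\[
\norm{\PTH_{\bar\nu_n,\bar\nu}[\widehat\cC_{\bar\nu_n}]-\cC_n^{\ast}}_{\mathcal H_{\bar\nu}}+\norm{\cC_n^{\ast}-\cC_{\bar\nu}}_{\mathcal H_{\bar\nu}}.
\]
The second term is a Hilbert-space law of large numbers with summands bounded in $\mathcal H_{\bar\nu}$; note $\norm{(A_i-I)x}_{L^2(\bar\nu)}^2=\mathrm{tr}\bigl((A_i-I)\Sigma_{\bar\nu}(A_i-I)\bigr)$ is bounded on $\mathbb S$. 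The statement is written as a deterministic $O(\cdot)$ bound in the barycenter discrepancy, so I will treat it as comparing $\PTH_{\bar\nu_n,\bar\nu}[\widehat\cC_{\bar\nu_n}]$ with the same-sample operator at $\bar\nu$, exactly as in the proof of Theorem~\ref{thm:1}. The main work is then the first term. Since parallel transport is linear and an isometry (Lemma~\ref{lemma:2}), each rank-one term satisfies
\[
\PTH_{\bar\nu_n,\bar\nu}[u\otimes u]=\PT u\otimes \PT u .
\]
Combined with $\norm{a\otimes a-b\otimes b}_{HS}\le(\norm a+\norm b)\norm{a-b}$, this reduces everything to bounding, uniformly in $i$,
\[
\norm{\PT_{\bar\nu_n,\bar\nu}(\log_{\bar\nu_n}\nu_i)-\log_{\bar\nu}\nu_i}_{L^2(\bar\nu)} .
\]

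I compare these two vectors through the plain pushforward translation $\TM$ along the geodesic from $\bar\nu_n$ to $\bar\nu$. That geodesic is Gaussian, with linear map $B=\Sigma_{\bar\nu_n}^{-1/2}(\Sigma_{\bar\nu_n}^{1/2}\Sigma_{\bar\nu}\Sigma_{\bar\nu_n}^{1/2})^{1/2}\Sigma_{\bar\nu_n}^{-1/2}$, so the flow maps are linear and $\TM(\log_{\bar\nu_n}\nu_i)(x)=(\hat A_i-I)B^{-1}x$. The bound then has two pieces.

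\textbf{(a) Parallel transport versus translation.} I need $\norm{\PT u-\TM u}_{L^2(\bar\nu)}\lesssim \norm{B-I}\,\norm u$. Here I would use the parallel-transport estimates of \citep{maa/1228920869,chen2023wasserstein}: the deviation is controlled by the length of the curve times the Lipschitz constant of the velocity field, both of order $\norm{B-I}$ on $\mathbb S$.

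\textbf{(b) Translation versus the true log.} I need to control $\norm{(\hat A_i-I)B^{-1}-(A_i-I)}$, and this splits further. First, $\norm{B^{-1}-I}\lesssim\norm{B-I}$. Second, $\hat A_i-A_i$ is a difference of the map $\Sigma\mapsto \Sigma^{-1/2}(\Sigma^{1/2}\Sigma_i\Sigma^{1/2})^{1/2}\Sigma^{-1/2}$ evaluated at two points. To control it I would use operator-monotone Hölder continuity of the matrix square root, $\norm{X^{1/2}-Y^{1/2}}\le\norm{X-Y}^{1/2}$, together with $\norm{X^{-1/2}-Y^{-1/2}}\le\norm{X^{-1}-Y^{-1}}^{1/2}$, and uniform bounds on $\mathbb S$. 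This yields
\[
\norm{\hat A_i-A_i}\lesssim\norm{\Sigma_{\bar\nu_n}-\Sigma_{\bar\nu}}^{1/2}+\norm{\Sigma_{\bar\nu_n}^{-1}-\Sigma_{\bar\nu}^{-1}}^{1/2}.
\]
Likewise $\norm{B-I}$ is of the same order: $B=I$ when the covariances agree, and the same square-root Hölder bounds apply. Finally, $L^2(\bar\nu)$ norms of linear fields are bounded by $C^{1/2}$ times operator norms.

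The main obstacle is (a). Projections onto $\Tan_{\mu_s}$ enter the definition of $\PT$, but for linear fields on Gaussians it is cleaner to verify directly. I would first check whether linear symmetric fields stay tangent under $\TM$. They do not in general, because $(\hat A_i-I)B^{-1}$ need not be symmetric. In that case I would compute the projection onto the symmetric linear maps (gradients of quadratics) in the $\Sigma$-weighted inner product, and bound the antisymmetric remainder by $\norm{B-I}$. Summing the pieces, the $1/2$-Hölder terms dominate and give the claimed rate.
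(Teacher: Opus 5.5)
Your proof is correct and shares the paper's skeleton: the rank-one identity for $\PTH$, linear maps $T(x)=Ax$, comparison through the translation $\TM$, and setting aside the sampling term $\cC_n^\ast-\cC_{\bar\nu}$ (which the paper's deterministic statement also ignores implicitly). The route differs at the passage from $\PT$ to $\TM$.

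\textbf{How the paper handles that step.} It never estimates $\PT u-\TM u$. Instead it invokes Lemma~\ref{lem:1}, namely $\|\PT_{0,1}u_0-v_1\|_{\mu_1}\le\|\TM_{0,1}u_0-v_1\|_{\mu_1}$ for all tangent $v_1$, so that only $\|\TM_{\bar\nu_n,\bar\nu}(\log_{\bar\nu_n}\nu_i)-\log_{\bar\nu}\nu_i\|$ must be bounded. As in Theorem~\ref{thm:1}, it splits this into $\|A_{\bar\nu_n\to\nu_i}-A_{\bar\nu\to\nu_i}\|$ plus $\Wass_2(\bar\nu_n,\bar\nu)\lesssim\|\Sigma_{\bar\nu_n}-\Sigma_{\bar\nu}\|^{1/2}$. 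For the matrix difference it uses the alternative form $A=\Sigma_i^{1/2}(\Sigma_i^{-1/2}\Sigma^{-1}\Sigma_i^{-1/2})^{1/2}\Sigma_i^{1/2}$. There the reference covariance enters only once, through $\Sigma^{-1}$, so a single Powers--St\o rmer step gives $\|\Sigma_{\bar\nu_n}^{-1}-\Sigma_{\bar\nu}^{-1}\|^{1/2}$.

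\textbf{What each route buys.} The paper's shortcut avoids any control of the velocity field along the geodesic from $\bar\nu_n$ to $\bar\nu$. Your step (a) costs an extra $O(\|B-I\|)$, which is of the same order and harmless. Your route is, however, the more robust one. An inequality as in Lemma~\ref{lem:1} for \emph{every} tangent $v_1$ forces $\PT_{0,1}u_0=\Proj_{\mu_1}(\TM_{0,1}u_0)$: test with $v_1=\Proj_{\mu_1}\TM_{0,1}u_0-s\,(\PT_{0,1}u_0-\Proj_{\mu_1}\TM_{0,1}u_0)$ and let $s\to\infty$. Since $\PT$ and $\TM$ both preserve norms, $\TM_{0,1}u_0$ would then have to be tangent. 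For Gaussians this fails whenever the matrix of $u_0$ does not commute with $B$, which is exactly the non-symmetry you noticed. The paper's iteration goes wrong because it applies the one-step projection inequality with target $v_1\circ\flow(s_{m-1},1,\cdot)$, which is no longer tangent.

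\textbf{Point to tighten in (a).} The estimate you want is
$\|\PT_{0,1}u-\TM_{0,1}u\|_{\mu_1}\le\|u\|_{\mu_0}\int_0^1\Lip(v_t)\,dt$.
This is the time integral of the Lipschitz constant, not length times Lipschitz constant. It follows by telescoping the partition product against $\TM_{0,1}=\TM_{s_{m-1},1}\circ\dots\circ\TM_{0,s_1}$. The needed one-step bound is $\|(I-\Proj_{\mu_t})\TM_{s,t}w\|_{\mu_t}\le(e^{\int_s^t\Lip(v_r)\,dr}-1)\|w\|_{\mu_s}$ for tangent $w$. It comes from $\nabla(\phi\circ F)=DF^{\top}(\nabla\phi)\circ F$ with $F=\flow(t,s,\cdot)$. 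Here $v_t(x)=(B-I)((1-t)I+tB)^{-1}x$, so the integral is $\lesssim\|B-I\|$ on $\mathbb S$. Projecting $(\hat A_i-I)B^{-1}$ once onto symmetric fields, as in your last paragraph, computes $\Proj\circ\TM$, not $\PT$. It can therefore only serve as the one-step ingredient.

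\textbf{Point to tighten in (b).} Stacking the $1/2$-H\"older bound for both $\Sigma^{1/2}$ and the outer square root in $\Sigma^{-1/2}(\Sigma^{1/2}\Sigma_i\Sigma^{1/2})^{1/2}\Sigma^{-1/2}$ only yields exponent $1/4$. Either fix works:
\begin{itemize}
\item Use that $X\mapsto X^{1/2}$ is Lipschitz with constant $(2\sqrt a)^{-1}$ on $\{X\succeq aI\}$. On $\mathbb S$ this gives a bound linear in $\|\Sigma_{\bar\nu_n}-\Sigma_{\bar\nu}\|$, which implies the stated one because these differences are bounded.
\item Switch to the paper's alternative formula for $A$.
\end{itemize}
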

The convergence of $\Sigma_{\overline{\nu}_n}$ to $ \Sigma_{\overline{\nu}} $ has been established by \citet{kroshnin2021statistical}. In particular, both $\norm{\Sigma_{\overline{\nu}_n}- \Sigma_{\overline{\nu}}}_2$ and $\norm{\Sigma_{\overline{\nu}_n}^{-1}- \Sigma_{\overline{\nu}}^{-1}}_2$ can be shown to be upper bounded by $\norm{\Sigma_{\overline{\nu}}^{-1/2}\Sigma_{\overline{\nu}_n} \Sigma_{\overline{\nu}}^{-1/2} - I}_F $, which is of order $O(1/\sqrt{n})$ with high probability under weaker assumptions \citet[Thm.~2.3]{kroshnin2021statistical}.
\begin{corollary}
Suppose $\Sigma_i$ are i.i.d. random draws from a distribution on $\mathbb{S}$ Under the assumptions of \Cref{thm:2},
\[\norm{\mathcal{P}_{\overline{\nu}_n, \overline{\nu}}[\widehat{\mathcal{C}}_{\overline{\nu}_n}] - \mathcal{C}_{\overline{\nu}}}_{\mathcal{H}_{\overline{\nu}}}=O_p(n^{-1/4}).\]
\end{corollary}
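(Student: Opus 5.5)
The corollary should follow by combining \Cref{thm:2} with the barycenter rate of \citet{kroshnin2021statistical}. Since $\Sigma_i$ and $\Sigma_{\overline{\nu}}$ lie in $\mathbb{S}$, and we assume $\Sigma_{\overline{\nu}_n}\in\mathbb{S}$ (this holds automatically, because the empirical Bures--Wasserstein barycenter of matrices in $\mathbb{S}$ has spectrum in $[c,C]$ by the fixed-point characterization), \Cref{thm:2} gives a deterministic bound
\[
\norm{\mathcal{P}_{\overline{\nu}_n, \overline{\nu}}[\widehat{\mathcal{C}}_{\overline{\nu}_n}] - \mathcal{C}_{\overline{\nu}}}_{\mathcal{H}_{\overline{\nu}}}\le K\bigl(\norm{\Sigma_{\overline{\nu}_n}- \Sigma_{\overline{\nu}} }_2^{1/2} + \norm{\Sigma_{\overline{\nu}_n}^{-1}- \Sigma_{\overline{\nu}}^{-1}}_2^{1/2}\bigr).
\]
Here $K$ depends only on $c,C,m$. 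It therefore suffices to show that both matrix differences are $O_p(n^{-1/2})$.

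For the first term, write $\Sigma_{\overline{\nu}_n}-\Sigma_{\overline{\nu}}=\Sigma_{\overline{\nu}}^{1/2}(\Sigma_{\overline{\nu}}^{-1/2}\Sigma_{\overline{\nu}_n}\Sigma_{\overline{\nu}}^{-1/2}-I)\Sigma_{\overline{\nu}}^{1/2}$. This gives
\[
\norm{\Sigma_{\overline{\nu}_n}-\Sigma_{\overline{\nu}}}_2\le C\norm{\Sigma_{\overline{\nu}}^{-1/2}\Sigma_{\overline{\nu}_n}\Sigma_{\overline{\nu}}^{-1/2}-I}_F .
\]
For the inverse, use $\Sigma_{\overline{\nu}_n}^{-1}-\Sigma_{\overline{\nu}}^{-1}=\Sigma_{\overline{\nu}_n}^{-1}(\Sigma_{\overline{\nu}}-\Sigma_{\overline{\nu}_n})\Sigma_{\overline{\nu}}^{-1}$. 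Together with the spectral lower bound $c$ on both matrices, this yields
\[
\norm{\Sigma_{\overline{\nu}_n}^{-1}-\Sigma_{\overline{\nu}}^{-1}}_2\le c^{-2}C\norm{\Sigma_{\overline{\nu}}^{-1/2}\Sigma_{\overline{\nu}_n}\Sigma_{\overline{\nu}}^{-1/2}-I}_F .
\]

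Next I would invoke \citet[Thm.~2.3]{kroshnin2021statistical}. Its hypotheses (i.i.d.\ draws, bounded support in $\mathbb{S}$, hence finite moments and the required nondegeneracy) hold under our assumptions, and it gives $\norm{\Sigma_{\overline{\nu}}^{-1/2}\Sigma_{\overline{\nu}_n}\Sigma_{\overline{\nu}}^{-1/2}-I}_F=O_p(n^{-1/2})$. Plugging this into the two inequalities and taking square roots gives $O_p(n^{-1/4})$ for each term. Since a constant multiple of a sum of $O_p$ terms is $O_p$, the corollary follows.

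The main obstacle is checking that the assumptions of Kroshnin et al.'s theorem are met. In particular, their result requires the population barycenter to be well-conditioned, with a positive lower bound on the "curvature" operator, which is a restriction on the spread of $\Omega$. I would verify this first, using the uniform bounds $c\le\sigma_{\min}\le\sigma_{\max}\le C$. If that verification does not go through, I would fall back on \Cref{thm:3} to get $\Wass_2(\overline{\nu}_n,\overline{\nu})=O_p(n^{-1/2})$. For Gaussians in $\mathbb{S}$, Bures--Wasserstein distance controls $\norm{\Sigma_{\overline{\nu}_n}-\Sigma_{\overline{\nu}}}_2$ up to constants depending on $c,C$, which would again give the $n^{-1/2}$ rate for both matrix differences.
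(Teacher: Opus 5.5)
Your proposal follows the argument the paper gives just before the corollary: bound $\norm{\Sigma_{\overline{\nu}_n}-\Sigma_{\overline{\nu}}}_2$ and $\norm{\Sigma_{\overline{\nu}_n}^{-1}-\Sigma_{\overline{\nu}}^{-1}}_2$ by a constant times $\norm{\Sigma_{\overline{\nu}}^{-1/2}\Sigma_{\overline{\nu}_n}\Sigma_{\overline{\nu}}^{-1/2}-I}_F=O_p(n^{-1/2})$ from \citet{kroshnin2021statistical}, plug into \Cref{thm:2}, and take square roots; your explicit derivations of the two matrix inequalities are correct.

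One caveat: the bound in \Cref{thm:2} is not genuinely deterministic, because the decomposition in the proof of \Cref{thm:1} also contains the term $\frac{1}{n}\sum_{i=1}^n\log_{\overline{\nu}}\nu_i\otimes\log_{\overline{\nu}}\nu_i-\mathcal{C}_{\overline{\nu}}$, which the barycenter error does not control. That term is $O_p(n^{-1/2})$, since $\norm{\log_{\overline{\nu}}\nu_i}_{\overline{\nu}}^2=\Wass_2^2(\overline{\nu},\nu_i)$ is uniformly bounded on $\mathbb{S}$, so the $n^{-1/4}$ rate is unaffected.
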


For the consistency of eigenvalues and eigenvectors, the following results are derived directly from Lemmas 4.2 and 4.3 of \citet{bosq2000linear}: 
\begin{corollary}
For any $i \geq 1$, \[\sup_{i}\abs{\hat{\lambda}_i - \lambda_i}\leq\norm[\big]{\mathcal{P}_{\bar{\nu}_n, \bar{\nu}} [\widehat{\cC}_{\bar{\nu}_n}] - \cC_{\bar{\nu}}}_{\mathcal{H}_{\bar{\nu}}},\] and
\begin{align*}
    &\norm[\big]{\PT_{\bar{\nu}_n, \bar{\nu}}(\hat{\xi}_i) - \xi_i^{*} }_{\bar{\nu}}\leq \frac{2\sqrt{2}}{ \min_{1\leq i' \leq i} (\lambda_{i'} - \lambda_{i'+1})}\cdot\norm[\big]{\mathcal{P}_{\bar{\nu}_n, \bar{\nu}}[\widehat{\cC}_{\bar{\nu}_n}] - \cC_{\bar{\nu}}}_{\mathcal{H}_{\bar{\nu}}}.
\end{align*}
\end{corollary}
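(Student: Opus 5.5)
The bounds follow from two classical perturbation results for compact self-adjoint operators on a separable Hilbert space, applied on the single space $\Tan_{\bar{\nu}}\cP_2(M)$. Set
\[
\widetilde{\cC}\coloneqq\PTH_{\bar{\nu}_n,\bar{\nu}}[\widehat{\cC}_{\bar{\nu}_n}]=\PT_{\bar{\nu}_n,\bar{\nu}}\circ\widehat{\cC}_{\bar{\nu}_n}\circ\PT_{\bar{\nu},\bar{\nu}_n}.
\]
The first step is to show that $\widetilde{\cC}$ is self-adjoint, positive and compact on $\Tan_{\bar{\nu}}\cP_2(M)$, and that it has the same spectrum as $\widehat{\cC}_{\bar{\nu}_n}$. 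The eigenvectors of $\widetilde{\cC}$ should be exactly $\PT_{\bar{\nu}_n,\bar{\nu}}(\hat{\xi}_i)$.

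For this I will use Lemma~\ref{lemma:2}. Linearity is \eqref{eq:1}, and the adjoint relation \eqref{eq:2} together with the group property gives $\PT_{\bar{\nu},\bar{\nu}_n}=\PT_{\bar{\nu}_n,\bar{\nu}}^{-1}=\PT_{\bar{\nu}_n,\bar{\nu}}^{*}$. So parallel transport is a unitary map between the two tangent spaces. Conjugating the finite-rank, nonnegative, self-adjoint operator $\widehat{\cC}_{\bar{\nu}_n}$ by a unitary map preserves all three properties. It also preserves the eigenvalues $\hat{\lambda}_i$, and it sends each eigenvector $\hat{\xi}_i$ to $\PT_{\bar{\nu}_n,\bar{\nu}}(\hat{\xi}_i)$, which has unit norm. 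The population operator $\cC_{\bar{\nu}}$ is trace class, with eigenpairs $(\lambda_i,\xi_i^*)$ from the eigendecomposition above. Unitary invariance of the Hilbert--Schmidt norm is what makes the comparison on $\Tan_{\bar{\nu}}\cP_2(M)$ legitimate.

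For the eigenvalue bound I would apply Lemma~4.2 of \citet{bosq2000linear}. This is the Weyl-type inequality $\sup_i\abs{\lambda_i(A)-\lambda_i(B)}\le\norm{A-B}_{op}$ for compact self-adjoint operators, proved via the Courant--Fischer min--max characterization $\lambda_i(A)=\max_{\dim V=i}\min_{v\in V,\norm{v}=1}\inner{Av,v}$. I take $A=\widetilde{\cC}$ and $B=\cC_{\bar{\nu}}$. It remains to note that the operator norm is dominated by the Hilbert--Schmidt norm $\norm{\cdot}_{\mathcal{H}_{\bar{\nu}}}$.

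For the eigenvector bound I would apply Lemma~4.3 of \citet{bosq2000linear}. It gives
\[
\norm{\hat{v}_i-\xi_i^*}\le a_i\norm{A-B}_{op},\qquad a_i=\frac{2\sqrt{2}}{\min(\lambda_{i-1}-\lambda_i,\lambda_i-\lambda_{i+1})},
\]
where $\hat{v}_i$ is the $i$-th eigenvector of $A$ with its sign chosen so that $\inner{\hat{v}_i,\xi_i^*}\ge 0$. The stated constant uses $\min_{1\le i'\le i}(\lambda_{i'}-\lambda_{i'+1})$, and this minimum is at most $\min(\lambda_{i-1}-\lambda_i,\lambda_i-\lambda_{i+1})$. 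Replacing the denominator therefore only weakens the bound, so the claimed inequality follows. Again I pass to the Hilbert--Schmidt norm at the end.

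The main obstacle is bookkeeping rather than analysis. First, the sign convention for $\hat{\xi}_i$ must be made explicit; I would orient $\hat{\xi}_i$ so that $\inner{\PT_{\bar{\nu}_n,\bar{\nu}}(\hat{\xi}_i),\xi_i^*}_{\bar{\nu}}\ge0$. Second, the bound is only meaningful when the eigenvalues involved are simple, that is, when the gaps are positive. Third, I must verify that parallel transport is a genuine isometry, which needs both the group property from \citep{maa/1228920869} and \eqref{eq:2}, so that the spectral data of $\widehat{\cC}_{\bar{\nu}_n}$ carries over unchanged to $\widetilde{\cC}$.
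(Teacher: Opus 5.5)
Your proposal is correct and follows the paper's route. The paper simply invokes Lemmas~4.2 and~4.3 of \citet{bosq2000linear} for $\PTH_{\bar{\nu}_n,\bar{\nu}}[\widehat{\cC}_{\bar{\nu}_n}]$ against $\cC_{\bar{\nu}}$. Your unitary-conjugation argument via Lemma~\ref{lemma:2}, the comparison $\min_{1\le i'\le i}(\lambda_{i'}-\lambda_{i'+1})\le\min(\lambda_{i-1}-\lambda_i,\lambda_i-\lambda_{i+1})$, and the passage from operator norm to Hilbert--Schmidt norm supply exactly the details the paper leaves implicit. Your caveats about fixing the sign of $\hat{\xi}_i$ and requiring positive gaps are also genuinely needed for the statement to hold as written.
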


\section{Numerical experiments}
\label{sec:sim}

\subsection{Simple Gaussian Example}

We first perform a simulation study of the WT-PCA. The simulation setup has three bivariate Gaussians $N(\mu_i,\Sigma_i), i=1,2,3$ with \[\mu_1=(-4,4)^\top,\qquad\mu_2=(3,4)^\top,\qquad\mu_3=(0,-4)^\top,\] and
\[
\Sigma_1 = \begin{pmatrix}
4 & 0 \\
0 & 1 \\
\end{pmatrix},\qquad
\Sigma_2 = \begin{pmatrix}
1 & 0 \\
0 & 4 \\
\end{pmatrix},\qquad
\Sigma_3 = \begin{pmatrix}
1/2 & -1/4 \\
-1/4 & 1 \\
\end{pmatrix}.
\]

\begin{figure}[htbp!]
	\centering
    \includegraphics[width=0.76\columnwidth]{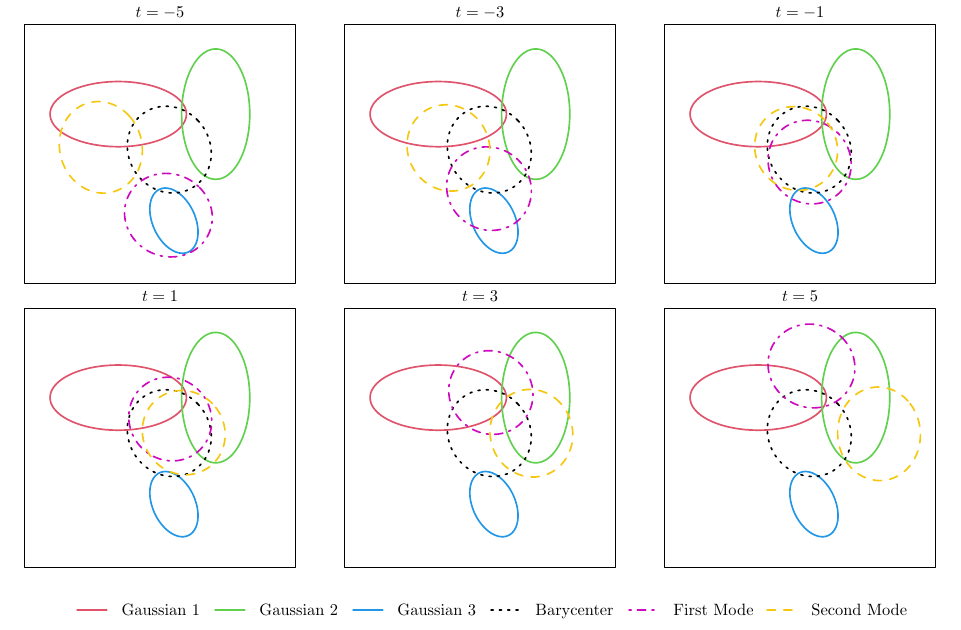}
	\caption{First two principal modes of geodesic variation of the WT-PCA in the example of three bivariate Gaussians, each circle represents the \(95\%\) percentile region of the underlying distribution.}
	\label{fig:three_gaussians_principal_variations}
\end{figure}

In \Cref{fig:three_gaussians_principal_variations}, we plot the first two principal modes of variations at six different time points: $t = -5, -3, -1, 1, 3, 5$. It is interesting to observe that the first principal mode of variation captures the vertical shift of the three Gaussians, and the second one captures the horizontal shift pattern. We also observe that as the time varies, shape and orientation of distributions along the two principal modes of variation evolve continuously to the closest input Gaussian. In addition, we plot the first 10 eigenvalues of the Wasserstein covariance in \Cref{fig:three_gaussians_wasserstein_cov_eigenvalues}. We observe that there are only two positive eigenvalues, corresponding to the vertical and horizontal movement directions in \Cref{fig:three_gaussians_principal_variations}.

\begin{figure}[ht!]
	\centering
	% Created by tikzDevice version 0.12.3.1 on 2022-12-29 03:16:39
% !TEX encoding = UTF-8 Unicode
\begin{tikzpicture}[x=1pt,y=1pt]
\definecolor{fillColor}{RGB}{255,255,255}
\path[use as bounding box,fill=fillColor,fill opacity=0.00] (0,0) rectangle (231.26,144.54);
\begin{scope}
\path[clip] (  0.00,  0.00) rectangle (231.26,144.54);
\definecolor{drawColor}{RGB}{0,0,0}
\definecolor{fillColor}{RGB}{34,151,230}

\path[draw=drawColor,line width= 0.4pt,line join=round,line cap=round,fill=fillColor] ( 54.34, 48.84) rectangle ( 67.78,132.54);

\path[draw=drawColor,line width= 0.4pt,line join=round,line cap=round,fill=fillColor] ( 70.47, 48.84) rectangle ( 83.91, 95.05);

\path[draw=drawColor,line width= 0.4pt,line join=round,line cap=round,fill=fillColor] ( 86.60, 48.84) rectangle (100.04, 48.84);

\path[draw=drawColor,line width= 0.4pt,line join=round,line cap=round,fill=fillColor] (102.72, 48.84) rectangle (116.16, 48.84);

\path[draw=drawColor,line width= 0.4pt,line join=round,line cap=round,fill=fillColor] (118.85, 48.84) rectangle (132.29, 48.84);

\path[draw=drawColor,line width= 0.4pt,line join=round,line cap=round,fill=fillColor] (134.98, 48.84) rectangle (148.41, 48.84);

\path[draw=drawColor,line width= 0.4pt,line join=round,line cap=round,fill=fillColor] (151.10, 48.84) rectangle (164.54, 48.84);

\path[draw=drawColor,line width= 0.4pt,line join=round,line cap=round,fill=fillColor] (167.23, 48.84) rectangle (180.67, 48.84);

\path[draw=drawColor,line width= 0.4pt,line join=round,line cap=round,fill=fillColor] (183.36, 48.84) rectangle (196.79, 48.84);

\path[draw=drawColor,line width= 0.4pt,line join=round,line cap=round,fill=fillColor] (199.48, 48.84) rectangle (212.92, 48.84);
\end{scope}
\begin{scope}
\path[clip] (  0.00,  0.00) rectangle (231.26,144.54);
\definecolor{drawColor}{RGB}{0,0,0}

\node[text=drawColor,anchor=base,inner sep=0pt, outer sep=0pt, scale=  1.00] at ( 61.06, 26.40) {1};

\node[text=drawColor,anchor=base,inner sep=0pt, outer sep=0pt, scale=  1.00] at ( 77.19, 26.40) {2};

\node[text=drawColor,anchor=base,inner sep=0pt, outer sep=0pt, scale=  1.00] at ( 93.32, 26.40) {3};

\node[text=drawColor,anchor=base,inner sep=0pt, outer sep=0pt, scale=  1.00] at (109.44, 26.40) {4};

\node[text=drawColor,anchor=base,inner sep=0pt, outer sep=0pt, scale=  1.00] at (125.57, 26.40) {5};

\node[text=drawColor,anchor=base,inner sep=0pt, outer sep=0pt, scale=  1.00] at (141.70, 26.40) {6};

\node[text=drawColor,anchor=base,inner sep=0pt, outer sep=0pt, scale=  1.00] at (157.82, 26.40) {7};

\node[text=drawColor,anchor=base,inner sep=0pt, outer sep=0pt, scale=  1.00] at (173.95, 26.40) {8};

\node[text=drawColor,anchor=base,inner sep=0pt, outer sep=0pt, scale=  1.00] at (190.07, 26.40) {9};

\node[text=drawColor,anchor=base,inner sep=0pt, outer sep=0pt, scale=  1.00] at (206.20, 26.40) {10};
\end{scope}
\begin{scope}
\path[clip] (  0.00,  0.00) rectangle (231.26,144.54);
\definecolor{drawColor}{RGB}{0,0,0}

\node[text=drawColor,anchor=base,inner sep=0pt, outer sep=0pt, scale=  1.00] at (133.63,  2.40) {Component Number};

\node[text=drawColor,rotate= 90.00,anchor=base,inner sep=0pt, outer sep=0pt, scale=  1.00] at (  9.60, 90.27) {Eigenvalue};
\end{scope}
\begin{scope}
\path[clip] (  0.00,  0.00) rectangle (231.26,144.54);
\definecolor{drawColor}{RGB}{0,0,0}

\path[draw=drawColor,line width= 0.4pt,line join=round,line cap=round] ( 48.00, 48.84) -- ( 48.00,130.18);

\path[draw=drawColor,line width= 0.4pt,line join=round,line cap=round] ( 48.00, 48.84) -- ( 42.00, 48.84);

\path[draw=drawColor,line width= 0.4pt,line join=round,line cap=round] ( 48.00, 69.17) -- ( 42.00, 69.17);

\path[draw=drawColor,line width= 0.4pt,line join=round,line cap=round] ( 48.00, 89.51) -- ( 42.00, 89.51);

\path[draw=drawColor,line width= 0.4pt,line join=round,line cap=round] ( 48.00,109.85) -- ( 42.00,109.85);

\path[draw=drawColor,line width= 0.4pt,line join=round,line cap=round] ( 48.00,130.18) -- ( 42.00,130.18);

\node[text=drawColor,rotate= 90.00,anchor=base,inner sep=0pt, outer sep=0pt, scale=  1.00] at ( 33.60, 48.84) {0};

\node[text=drawColor,rotate= 90.00,anchor=base,inner sep=0pt, outer sep=0pt, scale=  1.00] at ( 33.60, 89.51) {4000};

\node[text=drawColor,rotate= 90.00,anchor=base,inner sep=0pt, outer sep=0pt, scale=  1.00] at ( 33.60,130.18) {8000};
\end{scope}
\end{tikzpicture}
	\caption{Top 10 eigenvalues of the Wasserstein covariance in the example of three bivariate Gaussians.}
	\label{fig:three_gaussians_wasserstein_cov_eigenvalues}
\end{figure}

\subsection{Grayscale Images}
In this experiment, we consider the MNIST dataset \citep{726791}, which consists of grayscale images of handwritten digits 0--9. All images have resolution \(28 \times 28\) pixels and are treated as histograms on a grid of the same size over \([0,1]^2\). We first computed the Wasserstein barycenter using the FRBary solver \cite{xu2026unifiedapproachcomputingwasserstein} and then applied \Cref{alg:wt_pca} to estimate the principal modes of variation. \Cref{fig:mnist_pca_dg2} displays the first six principal modes for the digit 2. Each row corresponds to a displacement generated by the associated variation vector field for $t\in(-2,2)$. The barycenter was computed from $n=1000$ images, and the Wasserstein covariance matrix was estimated using $1000$ Monte Carlo samples. The first mode primarily captures the slant of the handwritten 2s, while the second mode reflects variation in their height. Additional modes reveal other characteristic deformations. Based on their eigenvalues, these six modes collectively explain approximately $70\%$ of the local variation.

\begin{figure}[!htbp]
\centering
\begin{minipage}[t]{0.45\columnwidth}
	\centering
	\includegraphics[width=0.6\textwidth]{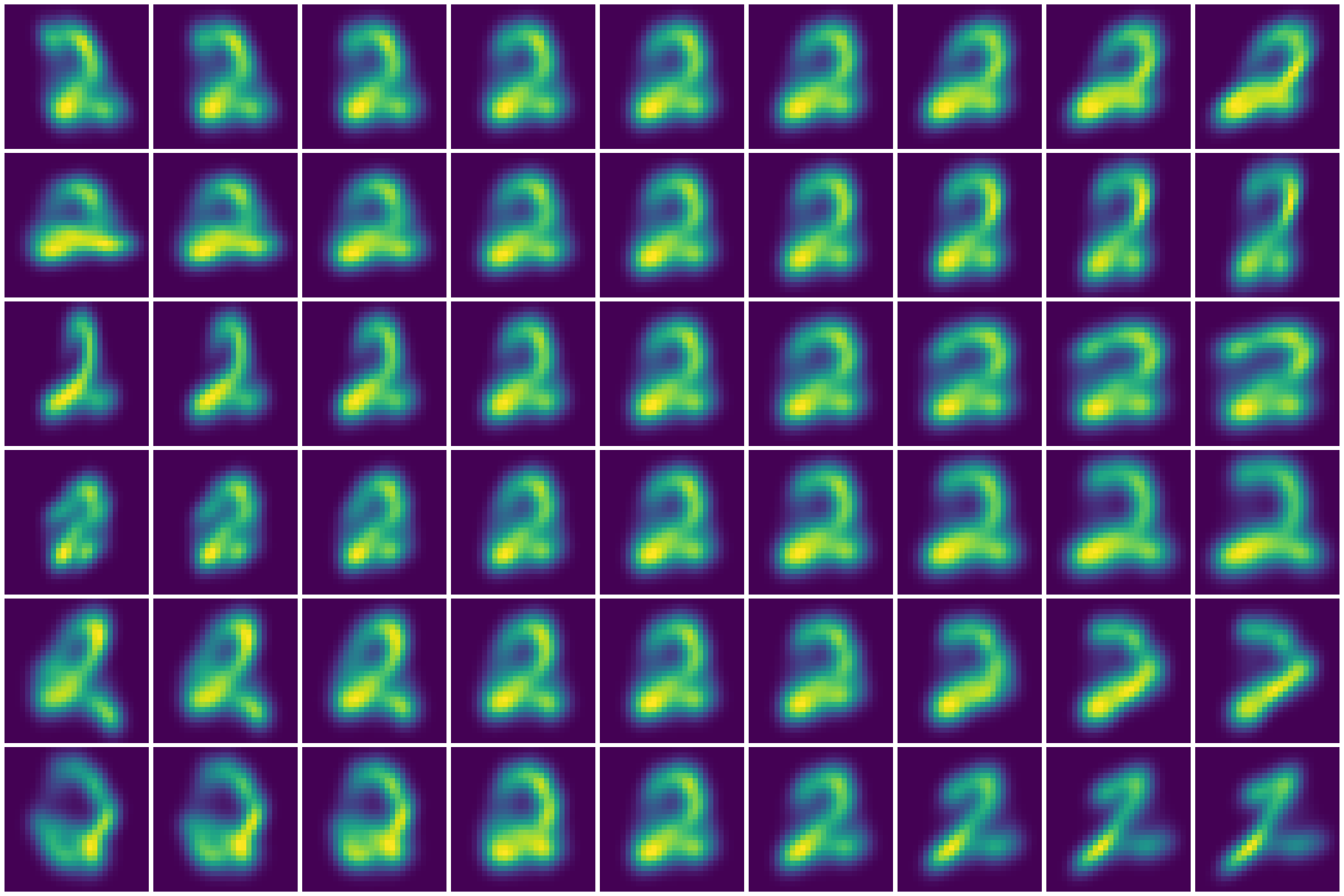}
	\caption{The top 6 principal modes for digit 2. The histogram of the barycenter is shown in the middle column.}
	\label{fig:mnist_pca_dg2}
\end{minipage}
\hfill
\begin{minipage}[t]{0.45\columnwidth}
	\centering
	\includegraphics[width=0.6\textwidth]{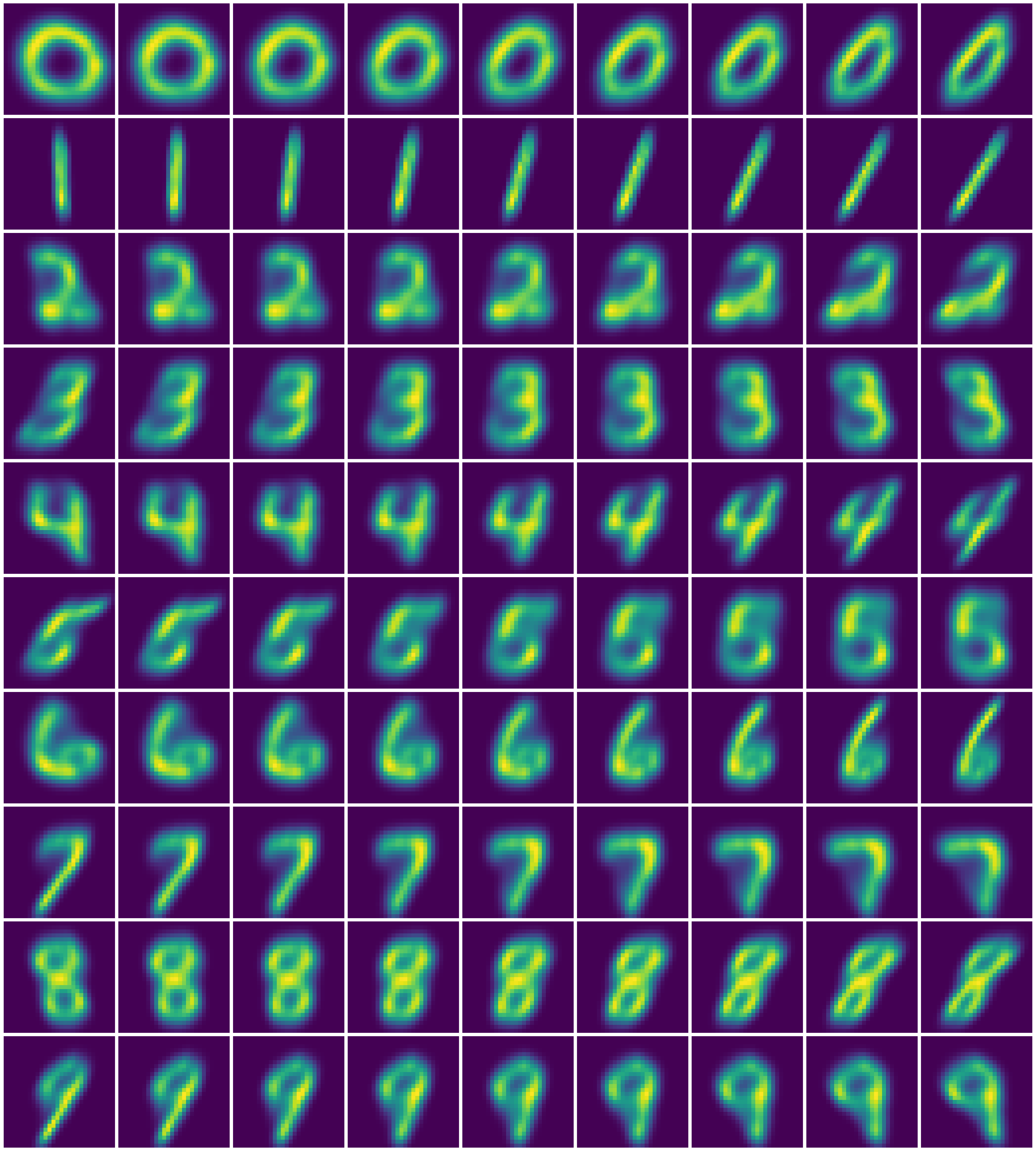}
	\caption{The first principal modes of digits 0--9.}
	\label{fig:mnist_pca_09}
\end{minipage}
\end{figure}

Moreover, we computed the first principal mode for each of the ten digits and visualized the corresponding displacements for \(t \in (-2,2)\) in \Cref{fig:mnist_pca_09}. As before, we used \(n=1000\) images and \(1000\) Monte Carlo samples for each digit. The results indicate that the dominant local variation across all digits is largely attributable to differences in handwriting slant.
\subsection{Color Palette Averaging}
In this experiment, we consider the task of averaging color histograms. The pixels of a three-channel image can be viewed as samples from a color distribution supported on \([0,1]^3\). We downloaded two images from \cite{wallpaper} and computed the Wasserstein barycenter of their color distributions. The image resolutions are \(3160 \times 1846\) and \(2560 \times 1573\), respectively.

The original and recolored images, together with samples from the corresponding color distributions in RGB space are shown in \Cref{fig:mnist_3d_input}. To identify the dominant mode of variation, we sampled \(4096\) pixels from the barycenter and performed WT-PCA. The resulting principal displacement field in RGB space is illustrated in \Cref{fig:mnist_3d_lin}. Remarkably, linear perturbations of the barycenter along the first principal mode produce color distributions that are visually close to the two original images. The eigenvalue spectrum further indicates that the first principal mode alone accounts for approximately \(78\%\) of the local variation.

\begin{figure}[!htbp]
\centering
\begin{subfigure}[t]{0.48\columnwidth}
\centering
\includegraphics[height=0.26\columnwidth,frame]{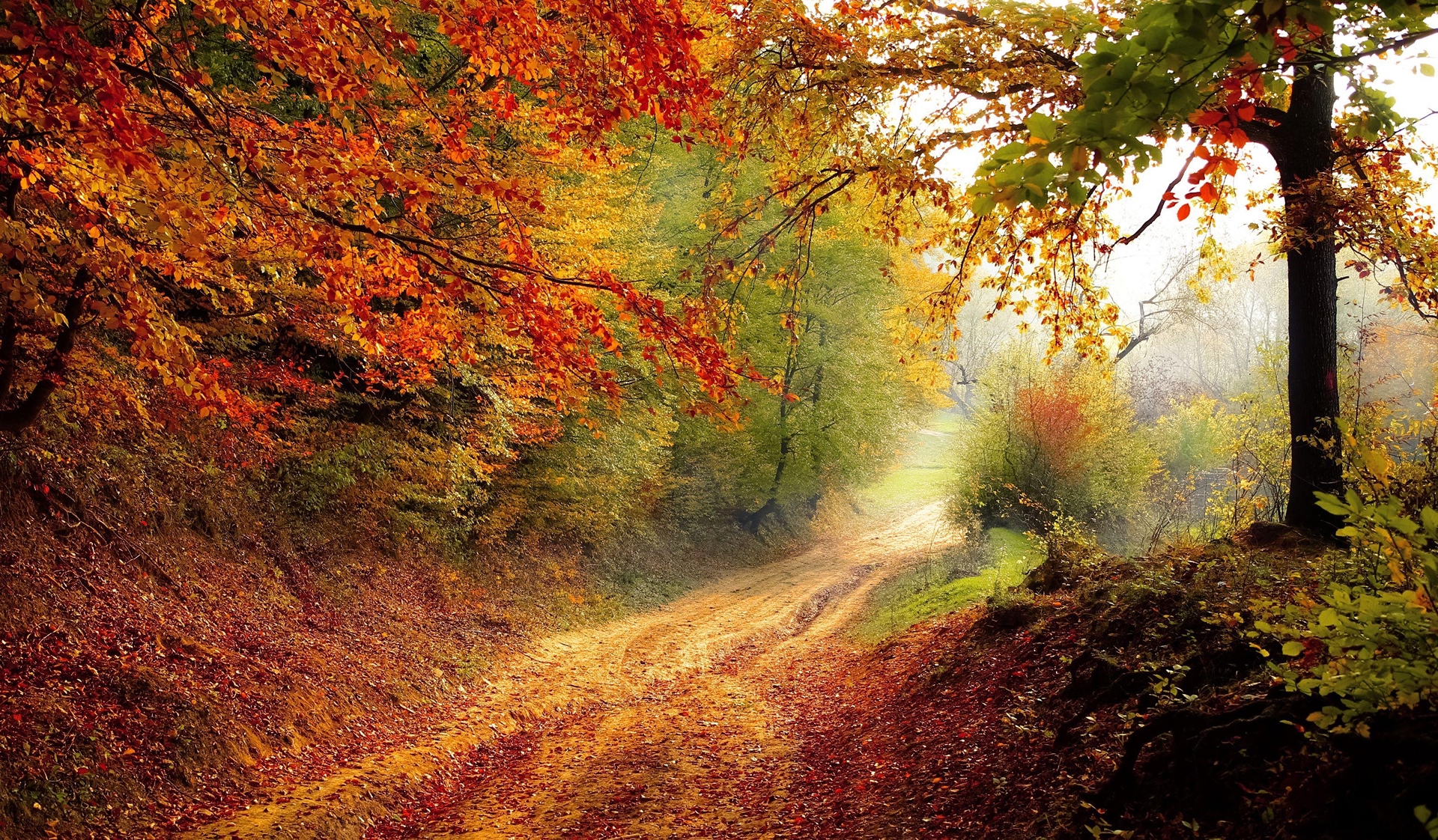}
\quad
\includegraphics[height=0.26\columnwidth,frame]{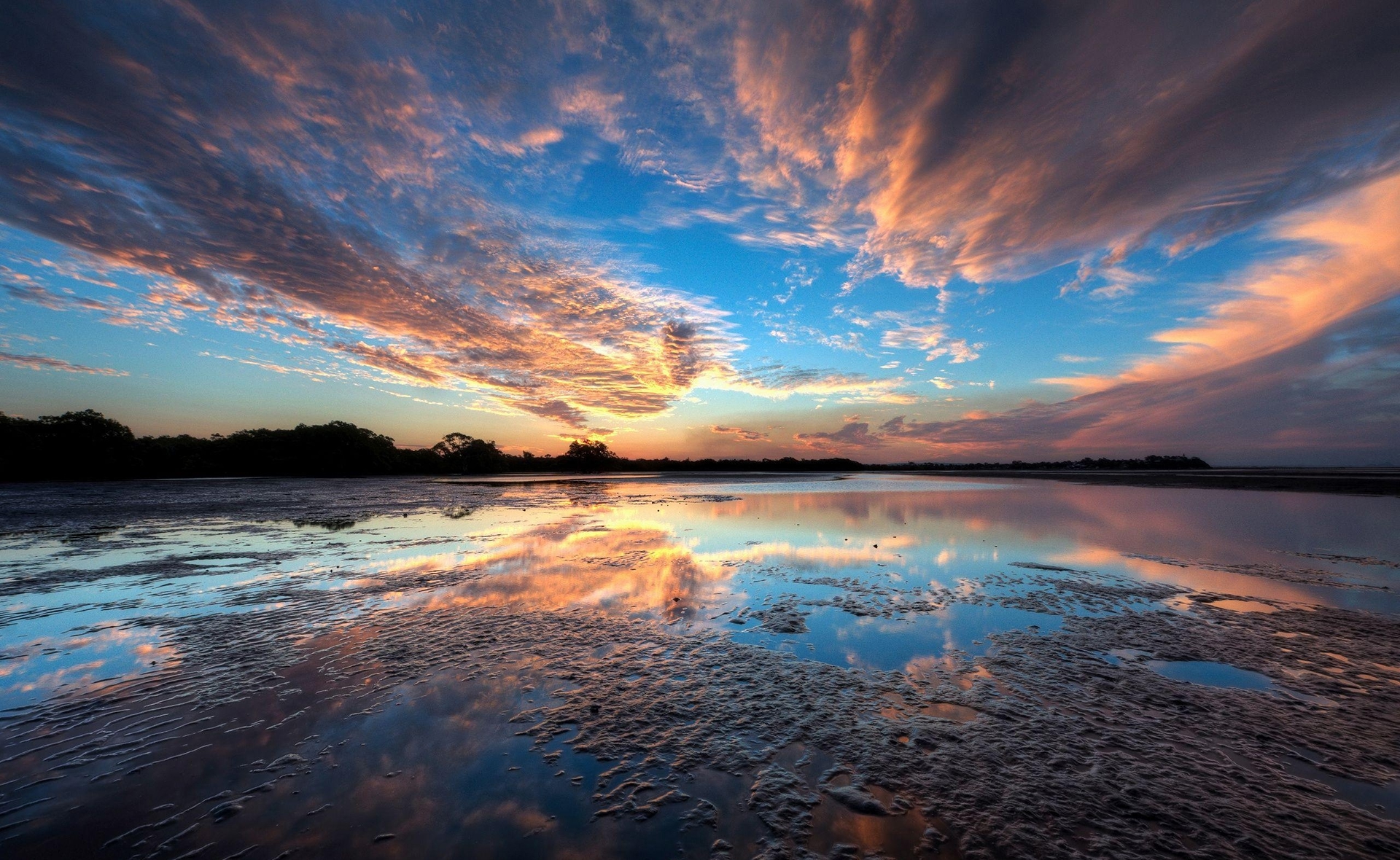}
\caption{Input images.}
\end{subfigure}
~
\begin{subfigure}[t]{0.48\columnwidth}
\centering
\includegraphics[height=0.26\columnwidth,frame]{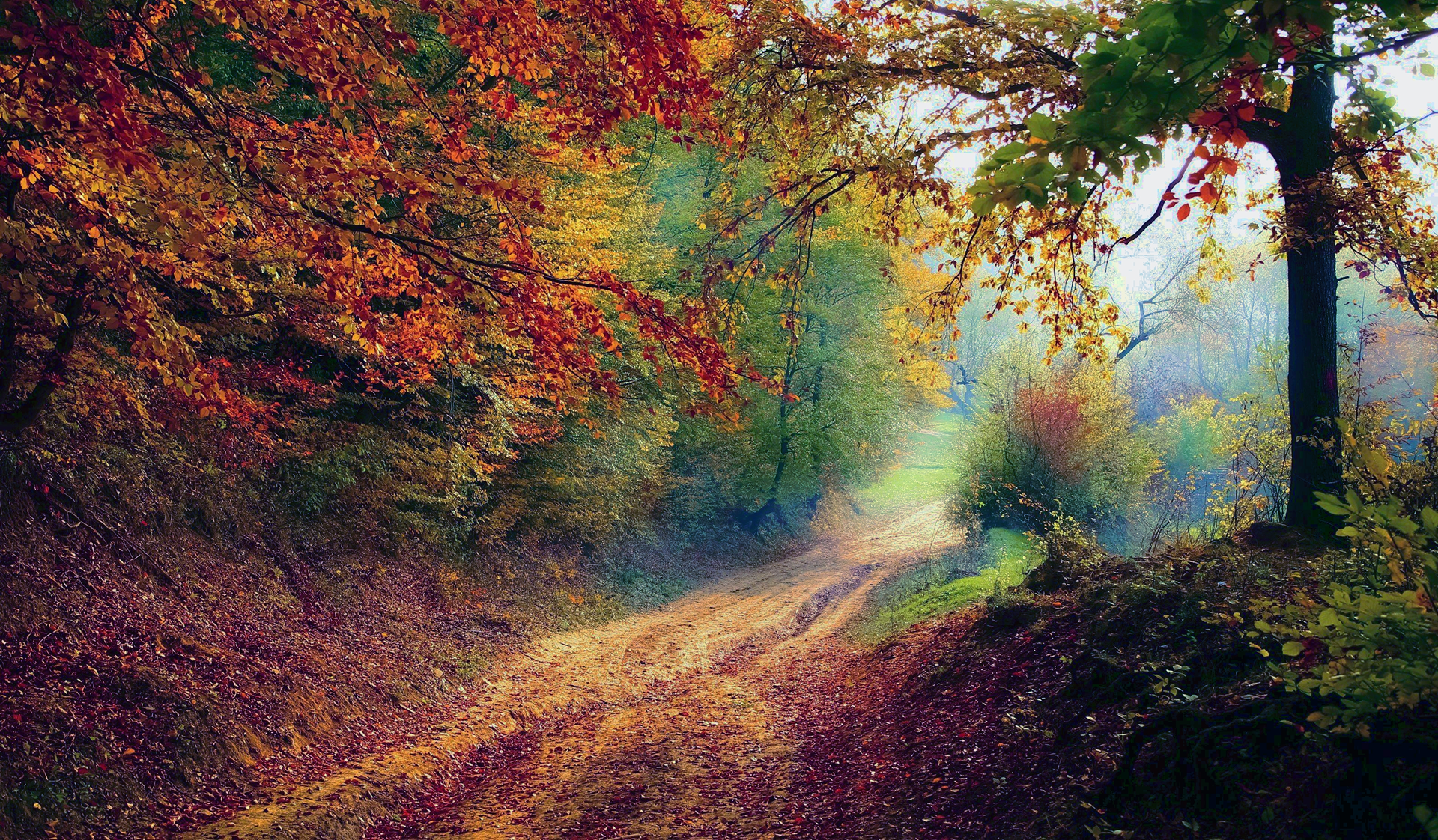}
\quad
\includegraphics[height=0.26\columnwidth,frame]{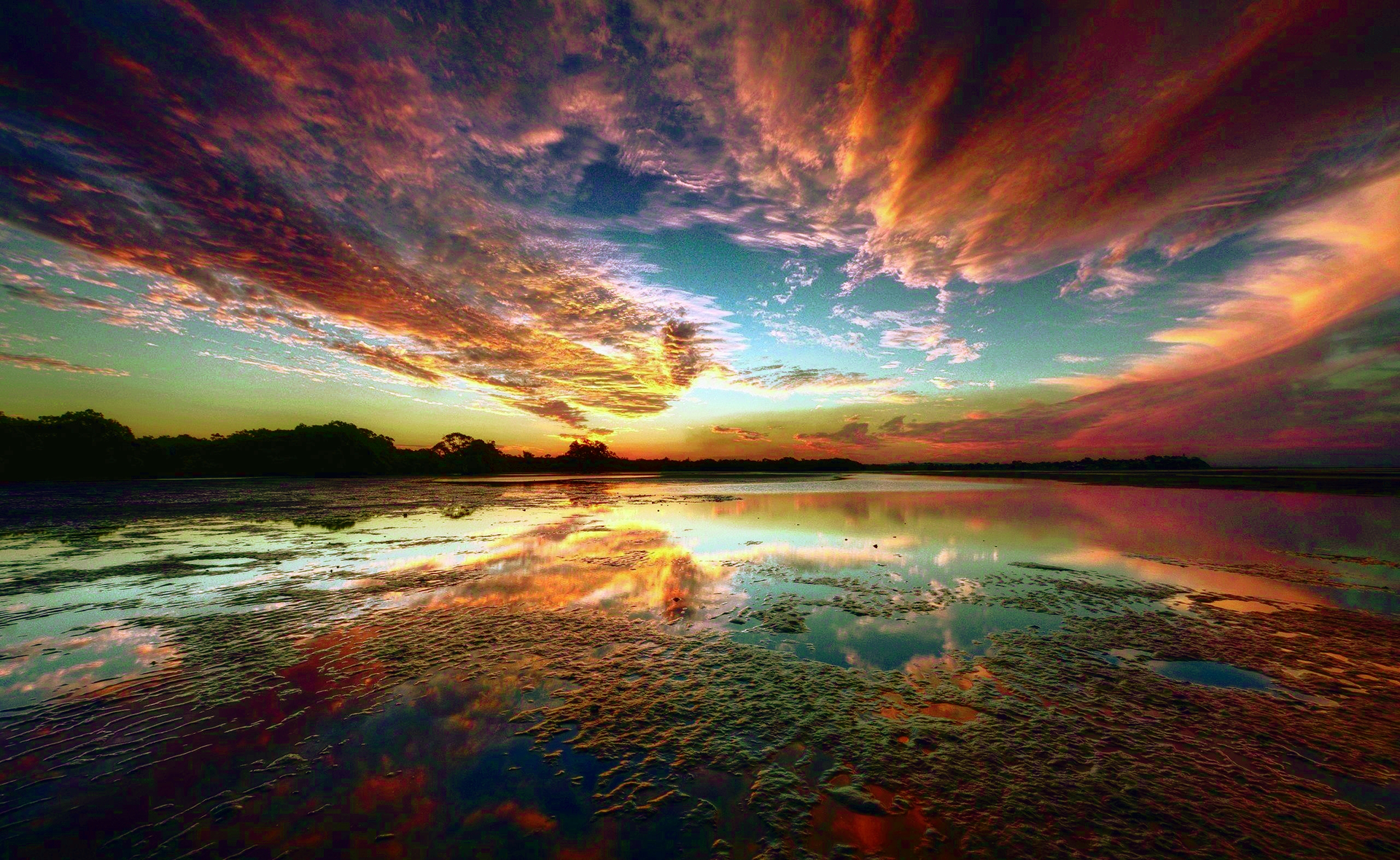}
\caption{Images with averaged color palette.}
\end{subfigure}
\\
\begin{subfigure}[t]{0.5\columnwidth}
\centering
\includegraphics[height=0.45\columnwidth]{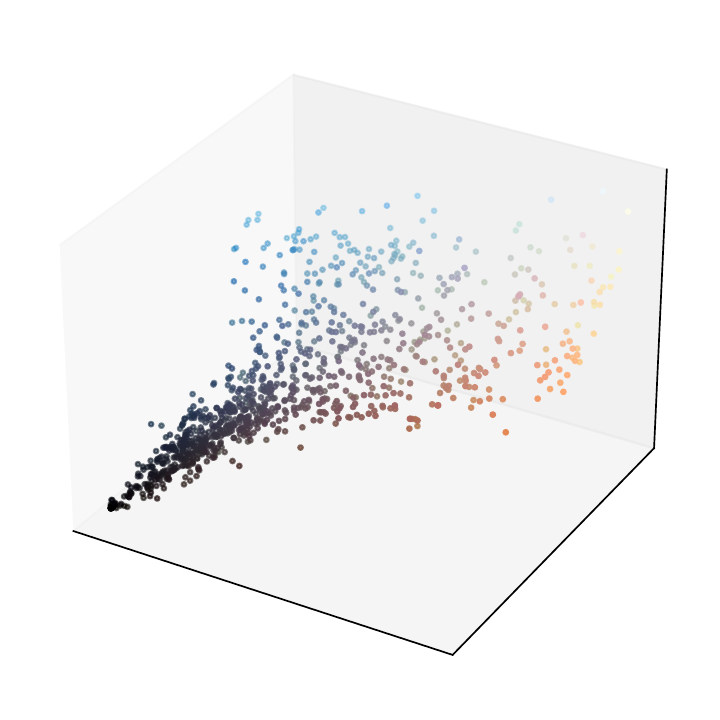}
\quad
\includegraphics[height=0.45\columnwidth]{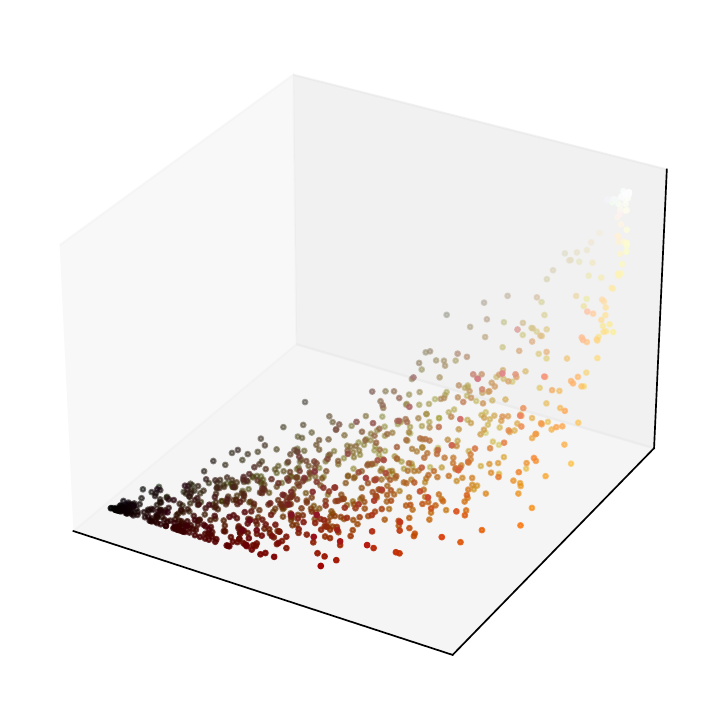}
\caption{Pixels of the original images in the RGB space $[0,1]^3$.}
\end{subfigure}
\caption{Results of color palette averaging.}
\label{fig:mnist_3d_input}
\end{figure}

\begin{figure*}[!htbp]
	\centering
	\includegraphics[width=0.9\textwidth]{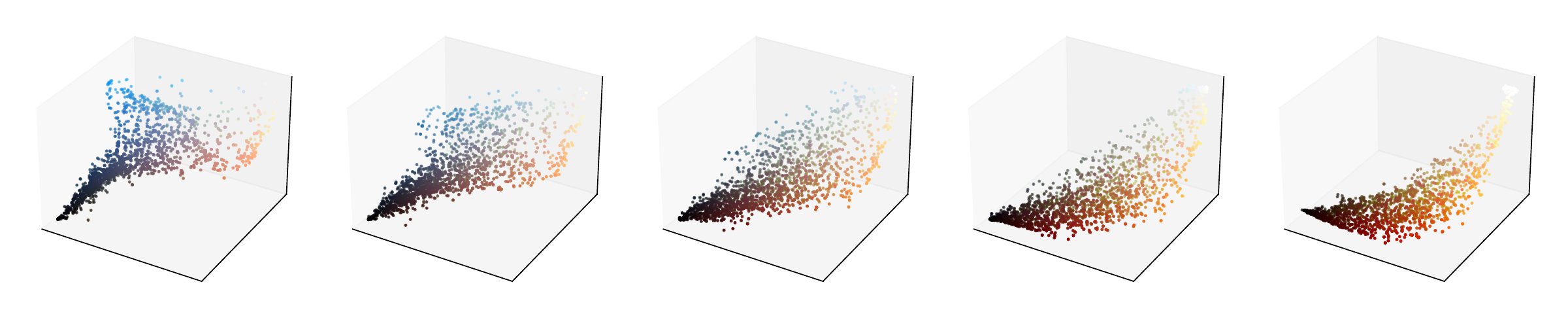}
	\caption{Perturbations of the color-palette barycenter (center) along the first principal mode of variation.}
	\label{fig:mnist_3d_lin}
\end{figure*}

\clearpage
\printbibliography

\clearpage
\appendix
\section{Technical Details}
\subsection{Alternative Proof of Lemma \ref{lemma:wt-pca_spectral_notanconstraint}}
\begin{proof}
First, since the objective function in~\eqref{eqn:tangential_PCA_1PC} is quadratic and thus convex in $\xi$, we shall consider the equivalent constraint $\norm{\xi}_{L^2(\bar{\nu})}^2 \leq 1$. Let $\Pi$ be the projection operator acting on $L^2(\bar{\nu})$ functions to the Hilbert space $\Tan_{\bar{\nu}} \cP_2(M)$. For any vector $\xi \in L^2(\bar{\nu})$, by the variational selection of tangent vectors (cf. Lemma 8.4.2 in~\cite{AmbrosioGigliSavare2008}), there exists a unique $\Pi(\xi) \in \Tan_{\bar{\nu}} \cP_2(M)$ and we have for any vector $v \in \Tan_{\bar{\nu}} \cP_2(M)$,
\[
\int_M \inner{v, \xi - \Pi(\xi)}\odif{\bar{\nu}} = 0.
\]
Let $w = \xi - \Pi(\xi)$ and $\psi$ be a smooth function with compact support in $M$. Since $\nabla \psi \in \Tan_{\bar{\nu}} \cP_2(M)$, integrating by parts yields that
\begin{equation}\label{eqn:divergence_free}
\int_M \psi \; \nabla \cdot (w \bar{\nu}) = 0,
\end{equation}
i.e., $w$ is a divergence-free vector in the sense of duality with respect to smooth test functions $C_c^\infty(M)$. Then by Proposition 8.5.4 in~\cite{AmbrosioGigliSavare2008}, we have
\[
\int_M\inner{T_{\bar{\nu} \to \nu}(x) - x, w(x)}\odif{\bar{\nu}}(x) = 0,
\]
which implies that
\[
\int_M \inner{T_{\bar{\nu} \to \nu} - \id, \xi}\odif{\bar{\nu}}= \int_M \inner{T_{\bar{\nu} \to \nu} - \id, \Pi(\xi)}\odif{\bar{\nu}}.
\]
Thus, the objective function in~\eqref{eqn:tangential_PCA_wasserstein_derivative_version_2} does not change if we replace $\xi$ by $\Pi(\xi)$ as
\begin{equation}\label{eqn:same_objective}
\int_{\cP_2(M)}\inner{T_{\bar{\nu}\to\nu} - \id, \Pi(\xi)}_{\overline \nu}^2  \odif{\Omega}(\nu).
\end{equation}
On the other hand, we can find a sequence $(\psi_k)_{k=1}^\infty$ in $L^2(\bar{\nu})$ such that 
\[
\int_M\inner{\nabla \psi_k, w}\odif{\bar{\nu}} \to \int_M \inner{\Pi(\xi), w}\odif{\bar{\nu}} \quad\text{as}\quad k \to \infty.
\]
Integrating by parts and using~\eqref{eqn:divergence_free}, we see that $\int_M \inner{\Pi(\xi), w }\odif{\bar{\nu}} = 0$. Thus, 
\begin{align*}
    \int_M\norm{\xi}_2^2 \odif{\bar{\nu}} &= \int_M \norm{\Pi(\xi) + w}_2^2\odif{\bar{\nu}}\\
    &= \int_M\norm{\Pi(\xi)}_2^2 \odif{\bar{\nu}} + \int_M \norm{w}_2^2 \odif{\bar{\nu}}\\
    &\geq \int_M\norm{\Pi(\xi)}_2^2 \odif{\bar{\nu}}.
\end{align*}
Then the feasible subset of~\eqref{eqn:tangential_PCA_wasserstein_derivative_version_2} is smaller than that of~\eqref{eqn:wt-pca_tangential}, i.e.,
\begin{equation}\label{eqn:feasibility_inclusion}
\{\norm{\xi}_{L^2(\bar{\nu})}^2 \leq 1 \} \subset \{ \norm{\Pi(\xi)}_{L^2(\bar{\nu})}^2  \leq 1\}.
\end{equation}
Now combining~\eqref{eqn:same_objective} and~\eqref{eqn:feasibility_inclusion} means that for any optimal $\xi^*$ solving~\eqref{eqn:tangential_PCA_wasserstein_derivative_version_2}, we must have $\xi^* = \Pi(\xi^*)$. This proves the equivalence of~\eqref{eqn:wt-pca_tangential} and~\eqref{eqn:tangential_PCA_wasserstein_derivative_version_2}.
\end{proof}

\subsection{Proof of Lemma \ref{lemma:2}}
\begin{proof}
    Equation \eqref{eq:1} follows from the fact that $\PTM_{s}^{t} (u_s+v_s) = \PTM_{s}^{t}(u_s) +\PTM_{s}^{t}(v_s)$. For Equation \eqref{eq:2}, since parallel transport preserves norm \citep[Prop~5.5]{AmbrosioGigliSavare2008} and $\PT_{t,s}\circ\PT_{s, t} = \id$ \citep[Prop~5.13]{AmbrosioGigliSavare2008}, we have 
    \[
        \inner{\PT_{s, t}(u_s), v_t}_{\mu_t}=\inner{\PT_{t, s}\circ\PT_{s, t} (u_s), \PT_{t, s}(v_t)}_{\mu_s}=\inner{u_s, \PT_{t, s}(v_t)}_{\mu_s}.
    \]
\end{proof}

\subsection{Proof of Theorem \ref{thm:1}}
\begin{proof}
Using Lemma \ref{lemma:2}, for any $g \in \Tan_{\bar{\nu}} \mathcal{P}_2(M)$
\begin{align*}
    \PTH_{\bar{\nu}_n, \bar{\nu}}[\widehat{\cC}_{\bar{\nu}_n}](g)&= \PT_{\bar{\nu}_n,\bar{\nu}} \left\{ \frac{1}{n} \sum_{i=1}^n \log_{\bar{\nu}_n} \nu_i \otimes \log_{\bar{\nu}_n} \nu_i \right\}\PT_{\bar{\nu}, \bar{\nu}_n} (g) \\
    &=\PT_{\bar{\nu}_n,\bar{\nu}} \left\{ \frac{1}{n} \sum_{i=1}^n \inner*{\log_{\bar{\nu}_n} \nu_i ,\PT_{\bar{\nu},\bar{\nu}_n} g }_{\bar{\nu}_n}  \log_{\bar{\nu}_n} \nu_i \right\} \\
    &=\frac{1}{n}\sum_{i=1}^n\inner*{\log_{\bar{\nu}_n} \nu_i , \PT_{\bar{\nu},\bar{\nu}_n} g}_{\bar{\nu}_n}\PT_{\bar{\nu}_n,\bar{\nu}}(\log_{\bar{\nu}_n}\nu_i)\\
    &=\frac{1}{n}\sum_{i=1}^n\inner*{\PT_{\bar{\nu}_n,\bar{\nu}}(\log_{\bar{\nu}_n} \nu_i), g}_{\bar{\nu}}\PT_{\bar{\nu}_n,\bar{\nu}} (\log_{\bar{\nu}_n}\nu_i )\\
    &=\left\{\frac{1}{n}\sum_{i=1}^n\PT_{\bar{\nu}_n,\bar{\nu}}(\log_{\bar{\nu}_n} \nu_i)\otimes\PT_{\bar{\nu}_n,\bar{\nu}} ( \log_{\bar{\nu}_n} \nu_i )\right\} g,
\end{align*}
which shows that $\PTH_{\bar{\nu}_n, \bar{\nu}}[\widehat{\cC}_{\bar{\nu}_n}]=n^{-1}\sum_{i=1}^n  P_{\bar{\nu}_n,\bar{\nu}}(\log_{\bar{\nu}_n} \nu_i) \otimes P_{\bar{\nu}_n,\bar{\nu}} ( \log_{\bar{\nu}_n} \nu_i )$. Then,
\begin{align*}
    \PTH_{\bar{\nu}_n, \bar{\nu}}[\widehat{\cC}_{\bar{\nu}_n}] - \cC_{\bar{\nu}}&=\frac{1}{n} \sum_{i=1}^n\PT_{\bar{\nu}_n,\bar{\nu}}(\log_{\bar{\nu}_n} \nu_i) \otimes\PT_{\bar{\nu}_n,\bar{\nu}} ( \log_{\bar{\nu}_n} \nu_i ) - \cC_{\bar{\nu}}\\
    &=\frac{1}{n} \sum_{i=1}^n  \log_{\bar{\nu}} \nu_i \otimes  \log_{\bar{\nu}} \nu_i  - \cC_{\bar{\nu}} \\
    &\quad+\frac{1}{n} \sum_{i=1}^n\bigl(\PT_{\bar{\nu}_n,\bar{\nu}} (\log_{\bar{\nu}_n} \nu_i)  - \log_{\bar{\nu}} \nu_i\bigr) \otimes  \log_{\bar{\nu}} \nu_i  \\
    & \quad+ \frac{1}{n} \sum_{i=1}^n  \log_{\bar{\nu}} \nu_i \otimes (\PT_{\bar{\nu}_n,\bar{\nu}} (\log_{\bar{\nu}_n} \nu_i)  - \log_{\bar{\nu}} \nu_i) \\
    & \quad+ \frac{1}{n} \sum_{i=1}^n\bigl(\PT_{\bar{\nu}_n,\bar{\nu}} (\log_{\bar{\nu}_n} \nu_i)  - \log_{\bar{\nu}} \nu_i\bigr) \otimes\bigl(\PT_{\bar{\nu}_n,\bar{\nu}} (\log_{\bar{\nu}_n} \nu_i)  - \log_{\bar{\nu}} \nu_i\bigr).
\end{align*}
We note that the above proof is to that of \citep[Lemma S1]{chen2023wasserstein}. In the following, we show that \[\norm*{\frac{1}{n} \sum_{i=1}^n\bigl(\PT_{\bar{\nu}_n,\bar{\nu}} (\log_{\bar{\nu}_n} \nu_i)  - \log_{\bar{\nu}} \nu_i\bigr)\otimes\log_{\bar{\nu}} \nu_i}_{\mathcal{H}_{\bar{\nu}}}^2 = o_p(1),\] and the other terms can be shown similarly. The following lemma will be useful.
\begin{lemma} \label{lem:1}
    For any $u_0 \in \Tan_{\mu_0}\PS_2(M)$, $v_1 \in \Tan_{\mu_1}\PS_2(M)$
    \begin{align*}
      \norm{\PT_{0, 1} (u_0) - v_1}_{\mu_1} \leq\norm{\TM_{0, 1} (u_0) - v_1}_{\mu_1}.
    \end{align*}
\end{lemma}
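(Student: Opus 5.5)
The idea is that the parallel transport is a limit of compositions of ``translate then project'' steps. Each projection is a nonexpansive orthogonal projection onto a closed subspace, and the translations are isometries. So $\PT_{0,1}(u_0)$ should be at least as close to any tangent vector $v_1$ as the naive translation $\TM_{0,1}(u_0)$. I would first reduce to a one-step statement and then pass to the limit over partitions.

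\emph{Step 1 (one step).} Take $v_1\in\Tan_{\mu_1}\PS_2(M)$ and any $w\in L^2(\mu_1)$. Since $\Proj_{\mu_1}$ is the orthogonal projection onto the closed subspace $\Tan_{\mu_1}\PS_2(M)$ and $v_1$ lies in it, Pythagoras gives
\[
\norm{w-v_1}_{\mu_1}^2=\norm{\Proj_{\mu_1}w-v_1}_{\mu_1}^2+\norm{w-\Proj_{\mu_1}w}_{\mu_1}^2 .
\]
Hence $\norm{\Proj_{\mu_1}w-v_1}_{\mu_1}\le\norm{w-v_1}_{\mu_1}$. Applying this with $w=\TM_{s,1}(u_s)$ handles a single-step partition: $\norm{\PTM_{s}^{1}(u_s)-v_1}_{\mu_1}\le\norm{\TM_{s,1}(u_s)-v_1}_{\mu_1}$.

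\emph{Step 2 (telescoping over a partition).} Two facts are needed.
\begin{itemize}
\item The translations compose: $\TM_{s_2,s_3}\circ\TM_{s_1,s_2}=\TM_{s_1,s_3}$. This follows from the group property of the flow maps $\flow$.
\item Each $\TM_{s,t}$ is an isometry $L^2(\mu_s)\to L^2(\mu_t)$, because $\flow(s,t,\cdot)_\pushforward\mu_s=\mu_t$.
\end{itemize}
Write $u_{s_k}$ for the iterate after $k$ projected steps, so that $u_{s_k}=\Proj_{\mu_{s_k}}\TM_{s_{k-1},s_k}u_{s_{k-1}}$.

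A direct induction fails, because inserting intermediate projections does not obviously reduce distance to $v_1$, which lives only at time $1$. The remedy is to transport $v_1$ backwards. Set $\tilde v_{s}=\TM_{1,s}v_1\in L^2(\mu_s)$. This need not be tangent, so Step 1 cannot be applied at intermediate times. This is the main obstacle.

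\emph{Step 3 (limit argument).} I would handle the obstacle as follows. Decompose
\[
\norm{\PT_{0,1}u_0-v_1}_{\mu_1}^2=\norm{\PT_{0,1}u_0}^2-2\inner{\PT_{0,1}u_0,v_1}_{\mu_1}+\norm{v_1}^2 ,
\]
and do the same for $\TM_{0,1}u_0$. The norms agree: $\PT_{0,1}$ preserves the norm (as cited in the proof of Lemma~\ref{lemma:2}), and $\TM_{0,1}$ is an isometry. So it suffices to show
\[
\inner{\PT_{0,1}u_0,v_1}_{\mu_1}\ge\inner{\TM_{0,1}u_0,v_1}_{\mu_1}.
\]
Note the roles: it is $\norm{\PT_{0,1}u_0}$ that equals $\norm{u_0}$, while the projected iterates along a partition can only lose norm.

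I would try to obtain this inner-product comparison via Lemma~\ref{lemma:2}, writing $\inner{\PT_{0,1}u_0,v_1}=\inner{u_0,\PT_{1,0}v_1}_{\mu_0}$. I am not confident this route closes.

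If it does not, the fallback is to appeal to the structure established in \citep{maa/1228920869}. There, the parallel transport is characterized as the tangent part of the translated field along the geodesic, with the loss $\norm{\TM_{0,1}u_0}^2-\norm{\PT_{0,1}u_0}^2$ concentrated in the normal component. Granting this, Step 1's Pythagorean argument applied to $w=\TM_{0,1}u_0$ finishes the proof.
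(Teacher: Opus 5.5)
Your Step~2 diagnosis is correct, and it is exactly the step the paper's own proof skips. The paper proves your one-step bound and then iterates it along the partition, comparing at time $s_{m-1}$ against $v_1\circ\flow(s_{m-1},1,\cdot)$ and then against its further pull-backs. From the second application on, the comparison field is not in the tangent space, so $\norm{\Proj_{\mu_s}w-z}_{\mu_s}\le\norm{w-z}_{\mu_s}$ cannot be used.

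The genuine gap in your proposal is the fallback in Step~3, which presumes $\PT_{0,1}u_0=\Proj_{\mu_1}(\TM_{0,1}u_0)$. That is not what the construction gives: $\PT_{0,1}$ is the limit of iterated translate-and-project maps over refining partitions, not a single projection at the endpoint. It also contradicts the norm preservation you invoked just before, since $\norm{\Proj_{\mu_1}\TM_{0,1}u_0}_{\mu_1}<\norm{\TM_{0,1}u_0}_{\mu_1}=\norm{u_0}_{\mu_0}$ unless $\TM_{0,1}u_0$ is already tangent.

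In fact no argument can close this gap, as your own Step~3 reduction reveals. Since $\Tan_{\mu_1}\PS_2(M)$ is a linear space, requiring $\inner{\PT_{0,1}u_0-\TM_{0,1}u_0,v_1}_{\mu_1}\ge0$ for all tangent $v_1$ forces equality (replace $v_1$ by $-v_1$). That means $\PT_{0,1}u_0=\Proj_{\mu_1}\TM_{0,1}u_0$, and equality of norms then forces $\TM_{0,1}u_0\in\Tan_{\mu_1}\PS_2(M)$ and $\PT_{0,1}u_0=\TM_{0,1}u_0$.

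Concretely, for $u_0\neq0$ take $v_1=-\PT_{0,1}u_0$. The left side is $2\norm{u_0}_{\mu_0}$, and the right side is $\norm{\TM_{0,1}u_0+\PT_{0,1}u_0}_{\mu_1}$, which is strictly smaller unless the two fields coincide. They need not coincide. Take $\mu_0=N(0,I)$, $\mu_1=N(0,\Sigma)$, and $u_0(x)=Sx$ with $S$ symmetric and $S\Sigma\neq\Sigma S$. Then $\TM_{0,1}u_0(y)=S\Sigma^{-1/2}y$, which is not curl-free and hence not tangent.

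So the lemma fails as stated, already in the Gaussian setting of Theorem~\ref{thm:2}. What your Step~1 does prove is the bound with the single projection $\Proj_{\mu_1}\TM_{0,1}$ in place of $\PT_{0,1}$. The application in Theorem~\ref{thm:1} would additionally need a separate estimate of $\norm{\PT_{0,1}u_0-\TM_{0,1}u_0}_{\mu_1}$.
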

\begin{proof}
For any $0 \leq s < t \leq 1$, $u_s \in \Tan_{\mu_s}\PS_2(M)$, $u_t \in \Tan_{\mu_t}\PS_2(M)$, note that 
\begin{align*}
\norm{\PTM_{s}^t(u_s) - u_t}_{\mu_t} & =\norm*{\Proj_{\mu_{t}}\bigl(\TM_{s, t}(u_s)\bigr) - u_t}_{\mu_t} \\
&\leq\norm{\TM_{s, t} (u_s) - u_t}_{\mu_t}\\
&=\norm{u_s \circ \flow(t, s, \cdot) - u_t}_{\mu_t}\\
&=\norm{u_s - u_t \circ \flow(s, t, \cdot)}_{\mu_s}.
\end{align*}
For any partition $S\in \mathcal{S}$, we apply the above inequality iteratively and obtain
\begin{align*}
&\norm{\PTM_{s_{m-1}}^1\circ\PTM_{s_{m-2}}^{s_{m-1}}\circ\dotsb\circ\PTM_{0}^{s_1}(u_0) - v_1}_{\mu_1} \\
&\quad\leq\norm{\PTM_{s_{m-2}}^{s_{m-1}}\circ\dotsb\circ\PTM_{0}^{s_1}(u_0) - v_1 \circ\flow(s_{m-1}, 1, \cdot)}_{\mu_{s_{m-1}}} \\
&\quad\leq\norm{u_0 - v_1 \circ \flow(s_{m-1}, 1, \cdot) \circ \flow(s_{m-2}, s_{m-1}, \cdot) \circ \cdots \circ \flow(0, s_1)}_{\mu_{s_{0}}} \\
&\quad=\norm{u_0\circ\flow(s_{1}, 0, \cdot)\circ\flow(s_{2}, s_{1}, \cdot)\circ\dotsb\circ\flow(1, s_{m-1}) - v_1 }_{\mu_{1}}\\
&\quad=\norm{u_0 \circ \flow(1, 0, \cdot) - v_1}_{\mu_{1}}.
\end{align*}
Taking limit with respect to all partitions concludes the lemma.
\end{proof}
Using the above lemma, we have
\begin{align*}
&\norm*{\frac{1}{n} \sum_{i=1}^n (\PT_{\bar{\nu}_n,\bar{\nu}} (\log_{\bar{\nu}_n} \nu_i)  - \log_{\bar{\nu}} \nu_i) \otimes  \log_{\bar{\nu}} \nu_i}_{\mathcal{H}_{\bar{\nu}}}^2 \\
&\quad\leq \left\{ \frac{1}{n} \sum_{i=1}^n\norm*{\PT_{\bar{\nu}_n,\bar{\nu}} (\log_{\bar{\nu}_n} \nu_i)  - \log_{\bar{\nu}} \nu_i}_{\bar{\nu}}^2 \right\}\left\{\frac{1}{n} \sum_{i=1}^n \norm{\log_{\bar{\nu}} \nu_i}_{\bar{\nu}}^2 \right\} \\
&\quad\leq \left\{ \frac{1}{n} \sum_{i=1}^n\norm*{\TM_{\bar{\nu}_n,\bar{\nu}} (\log_{\bar{\nu}_n} \nu_i)  - \log_{\bar{\nu}} \nu_i}_{\bar{\nu}}^2 \right\} \left\{ \frac{1}{n} \sum_{i=1}^n\norm{\log_{\bar{\nu}} \nu_i}_{\bar{\nu}}^2 \right\},
\end{align*}
It's easy to see that $n^{-1}\sum_{i=1}^n\norm{\log_{\bar{\nu}} \nu_i}_{\bar{\nu}}^2 =O_p(1) $. For the first part above,
\begin{align*}
    &\frac{1}{n} \sum_{i=1}^n\norm{\cT_{ \bar{\nu}_n, \bar{\nu}} (\log_{\bar{\nu}_n} \nu_i ) -  \log_{\bar{\nu}} \nu_i}_{\bar{\nu}}^2 \\
    &\quad=\frac{1}{n} \sum_{i=1}^n \norm{ (T_{ \bar{\nu}_n \rightarrow \nu_i  } -\id) \circ T_{\bar{\nu} \rightarrow \bar{\nu}_n} -  (T_{\bar{\nu} \rightarrow \nu_i  } -\id)}_{\bar{\nu}}^2 \\
    &\quad= \frac{1}{n} \sum_{i=1}^n \norm{ T_{\bar{\nu}_n \rightarrow \nu_i  }  \circ T_{\bar{\nu} \rightarrow \bar{\nu}_n } - T_{\bar{\nu} \rightarrow \bar{\nu}_n } -  T_{\bar{\nu} \rightarrow \nu_i  } + \id }_{\bar{\nu}}^2 \\
    &\quad\lesssim \frac{1}{n} \sum_{i=1}^n \norm{ T_{\bar{\nu}_n \rightarrow \nu_i  }  \circ T_{\bar{\nu} \rightarrow \bar{\nu}_n} -  T_{\bar{\nu} \rightarrow \nu_i  } }_{\bar{\nu}}^2 + \norm{ T_{\bar{\nu} \rightarrow \bar{\nu}_n }  - \id }_{\bar{\nu}}^2 \\
    &\quad= \frac{1}{n} \sum_{i=1}^n \norm{ T_{\bar{\nu}_n \rightarrow \nu_i  }   -  T_{\bar{\nu} \rightarrow \nu_i  } \circ T_{\bar{\nu}_n \rightarrow \bar{\nu}} }_{\bar{\nu}_n}^2 + \norm{ T_{\bar{\nu} \rightarrow \bar{\nu}_n }  - \id }_{\bar{\nu}}^2 \\
    &\quad= \frac{1}{n} \sum_{i=1}^n \norm{ T_{\bar{\nu}_n \rightarrow \nu_i  }  - T_{\bar{\nu} \rightarrow \nu_i  } + T_{\bar{\nu} \rightarrow \nu_i  } - T_{\bar{\nu} \rightarrow \nu_i  } \circ T_{\bar{\nu}_n \rightarrow \bar{\nu} } }_{\bar{\nu}_n}^2 +\Wass_2^2 (\bar{\nu}_n , \bar{\nu})\\
    &\quad\lesssim \frac{1}{n} \sum_{i=1}^n \norm{ T_{\bar{\nu} \rightarrow \nu_i  }  - T_{\bar{\nu} \rightarrow \nu_i  } \circ T_{\bar{\nu}_n \rightarrow \bar{\nu}} }_{\bar{\nu}_n}^2+ \norm{ T_{\bar{\nu}_n \rightarrow \nu_i  } - T_{\bar{\nu} \rightarrow \nu_i  } }_{\bar{\nu}_n}^2 +\Wass_2^2 (\bar{\nu}_n, \bar{\nu} ) \\
    &\quad= \frac{1}{n} \sum_{i=1}^n \norm{ T_{\bar{\nu} \rightarrow \nu_i  } \circ T_{\bar{\nu} \rightarrow \bar{\nu}_n} - T_{\bar{\nu} \rightarrow \nu_i  } }_{\bar{\nu}}^2 + \norm{ T_{\bar{\nu}_n \rightarrow \nu_i  } - T_{\bar{\nu} \rightarrow \nu_i  } }_{\bar{\nu}_n}^2 +\Wass_2^2 (\bar{\nu}_n, \bar{\nu} ) \\
    &\quad\lesssim \frac{1}{n} \sum_{i=1}^n  \norm{ T_{\bar{\nu}_n \rightarrow \nu_i  } - T_{\bar{\nu} \rightarrow \nu_i  } }_{\bar{\nu}_n}^2 + 2\Wass_2^2 (\bar{\nu}_n, \bar{\nu} ).
\end{align*}
where the last inequality follows from the assumption that  $ T_{\bar{\nu} \rightarrow \nu_i  } $ is $\beta$-Lipschitz for all $i$. For the first term, since the density of $\nu_n$ is bounded above, we have 
\begin{align*}
    &\frac{1}{n} \sum_{i=1}^n  \norm{ T_{\bar{\nu}_n \rightarrow \nu_i  } - T_{\bar{\nu} \rightarrow \nu_i  }}_{\bar{\nu}_n}^2\lesssim \frac{1}{n} \sum_{i=1}^n \int_{M}  \norm{ T_{\bar{\nu}_n \rightarrow \nu_i  } (x) - T_{\bar{\nu} \rightarrow \nu_i} (x)}^2_2\odif{x}.
\end{align*}
In addition, by \citep[Theorem~1.4]{berman2021convergence}, there exits a constant $C>0$ depending on the diameter and volume of $M$, and $c_1$ such that  
$$
\int_{M} \norm{ T_{\bar{\nu}_n \rightarrow \nu_i  } (x) - T_{\bar{\nu} \rightarrow \nu_i} (x)}_{2}^2 \odif{x} \leq C\Wass_2^{2^{1-m}}(\bar{\nu}_n, \bar{\nu} )\quad\forall i.
$$ 
If, in addition,  $\phi_{\bar{\nu} \rightarrow \nu_i  }$ is $\alpha$-strongly convex, \citep[Theorem~1.3]{berman2021convergence} is applied.
\end{proof}

\subsection{Proof of Theorem \ref{thm:2}}
\begin{proof}
For Gaussian distributions $\mu_0\sim N(0, \Sigma_0)$ and $\mu_1\sim N(0, \Sigma_1)$, it holds that \[T_{\mu_0 \rightarrow \mu_1} (x) = A_{\mu_0 \rightarrow \mu_1} x,\] where
\[
A_{\mu_0 \rightarrow \mu_1} = \Sigma_0^{-1/2}\left(\Sigma_0^{1/2}\Sigma_1\Sigma_0^{1/2}\right)^{1/2}\Sigma_0^{-1/2},
\]
which, by \citep[Eqn.~4.12]{bhatia2009positive}, can be equivalently written as
\begin{align*}
  A_{\mu_0 \rightarrow \mu_1} = \Sigma_1^{1/2}\left(\Sigma_1^{-1/2}\Sigma_0^{-1}\Sigma_1^{-1/2}\right)^{1/2}\Sigma_1^{1/2}.
\end{align*}
Since $ T_{\overline{\nu} \rightarrow \nu_i} $ is $C$-Lipshitz, we have from the proof of \Cref{eq:1}, 
\begin{align*}
    &\frac{1}{n} \sum_{i=1}^n \norm[\big]{\mathcal{T}_{ \overline{\nu}_n, \overline{\nu}} (\log_{\overline{\nu}_n} \nu_i ) -  \log_{\overline{\nu}} \nu_i}_{\overline{\nu}}^2\lesssim \frac{1}{n} \sum_{i=1}^n \int\norm{A_{\overline{\nu}_n \rightarrow \nu_i  }x - A_{\overline{\nu} \rightarrow \nu_i  } x}_{2}^2 \odif{\overline{\nu}_n} + 2\Wass_2^2 (\overline{\nu}_n, \overline{\nu} )
\end{align*}
Using $ \norm{\Sigma_i^{1/2}}_2 \leq \sqrt{C} $ and $\norm{\Sigma_i^{-1/2}}_2 \leq 1/\sqrt{c}$, we can get 
\begin{align*}
    &\frac{1}{n} \sum_{i=1}^n \int\norm{A_{\overline{\nu}_n \rightarrow \nu_i  }x - A_{\overline{\nu} \rightarrow \nu_i  } x}_{2}^2\odif{\overline{\nu}}_n \\
    &\quad\leq\frac{1}{n}\sum_{i=1}^n \norm{A_{\overline{\nu}_n \rightarrow \nu_i  }- A_{\overline{\nu} \rightarrow \nu_i  }}_{2}^2\int\norm{x}_2^2\odif{\overline{\nu}}_n \\
    &\quad= \frac{1}{n} \sum_{i=1}^n \norm*{\Sigma_i^{1/2}\left(\Sigma_i^{-1/2}\Sigma_{\overline{\nu}_n}^{-1}\Sigma_i^{-1/2}\right)^{1/2}\Sigma_i^{1/2} - \Sigma_i^{1/2}\left(\Sigma_i^{-1/2}\Sigma_{\overline{\nu}}^{-1}\Sigma_i^{-1/2}\right)^{1/2}\Sigma_i^{1/2}}_{2}^2\int\norm{x}_2^2 \odif{\overline{\nu}}_n \\
    &\quad\lesssim \frac{1}{n}\sum_{i=1}^n \norm*{\left(\Sigma_i^{-1/2}\Sigma_{\overline{\nu}_n}^{-1}\Sigma_i^{-1/2}\right)^{1/2} - \left(\Sigma_i^{-1/2}\Sigma_{\overline{\nu}}^{-1}\Sigma_i^{-1/2}\right)^{1/2}}_{2}^2.
\end{align*}
By Powers--St\o rmer inequality and properties of matrix norms,
\begin{align*}
    \norm*{\left(\Sigma_i^{-1/2}\Sigma_{\overline{\nu}_n}^{-1}\Sigma_i^{-1/2}\right)^{1/2} - \left(\Sigma_i^{-1/2}\Sigma_{\overline{\nu}}^{-1}\Sigma_i^{-1/2}\right)^{1/2}}_{2}^2
    &\leq \norm[\big]{ \Sigma_i^{-1/2}\Sigma_{\overline{\nu}_n}^{-1}\Sigma_i^{-1/2} - \Sigma_i^{-1/2}\Sigma_{\overline{\nu}}^{-1}\Sigma_i^{-1/2}}_{1}\\
    &\lesssim \norm{\Sigma_{\overline{\nu}_n}^{-1}- \Sigma_{\overline{\nu}}^{-1}}_2.
\end{align*}
Also, \citep[Thm.~1]{bhatia2019bures} and Powers--St\o rmer inequality implies that 
\begin{align*}
    \Wass_2^2 (\overline{\nu}_n, \overline{\nu} ) \leq\norm[\big]{\Sigma_{\overline{\nu}_n}^{1/2}- \Sigma_{\overline{\nu}}^{1/2}}_2^2 \lesssim\norm{\Sigma_{\overline{\nu}_n}- \Sigma_{\overline{\nu}}}_2.
\end{align*}
This concludes the theorem.
\end{proof}

\section{Derivation of \texorpdfstring{\Cref{alg:wt_pca}}{Algorithm 1}}
\subsection{WT-PCA as an eigenvalue problem}
In the discrete setting, the first \(L\) principal components \(\xi_1,\dotsc,\xi_L\) are retrieved by solving
\begin{maxi}[2]
{\scriptstyle\xi_1,\dotsc,\xi_L}{\sum_{\ell=1}^{L}\sum_{i=1}^n\inner{T_{\bar{\mu}\to\mu_i}-\id,\xi_\ell}_{\bar{\mu}}^2}{\label{eqn:orig_prob}}{}
\addConstraint{\inner{\xi_k,\xi_\ell}_{\bar{\mu}}}{=\delta_{k\ell}},
\end{maxi}
where we recall the tangential constraints were dropped due to \Cref{lemma:wt-pca_spectral_notanconstraint}. Let \(x_1,\dotsc,x_m\) be samples from \(\bar{\mu}\) and \(\tau_i(x_j)\coloneqq T_{\bar{\mu}\to\mu_i}(x_j)-x_j\) denote the logarithmic map. The objective function of \eqref{eqn:orig_prob} can be written as
\begin{align*}
&\sum_{\ell=1}^{L}\sum_{i=1}^n\inner{T_{\bar{\mu}\to\mu_i}-\id,\xi}_{\bar{\mu}}^2\\
&\quad=\sum_{\ell=1}^{L}\sum_{i=1}^n\left(\int_M\left[T_{\bar{\mu}\to\mu_i}(x)-x\right]^\top\xi(x)\,d\bar{\mu}(x)\right)^2\\
&\quad=\sum_{\ell=1}^{L}\sum_{i=1}^n\left[\sum_{j=1}^m\tau_i(x_j)^\top\xi(x_j)\right]^2\\
&\quad=\sum_{\ell=1}^{L}\sum_{i=1}^n\left[\sum_{j=1}^m\xi(x_j)^\top\tau_i(x_j)\tau_i(x_j)^\top\xi(x_j)+\sum_{a\neq b}^m\xi(x_k)^\top\tau_i(x_a)\tau_i(x_b)^\top\xi(x_\ell)\right]\\
&\quad=\sum_{\ell=1}^{L}\left[\sum_{j=1}^m\xi(x_j)^\top\left(\sum_{i=1}^n\tau_i(x_j)\tau_i(x_j)^\top\right)\xi(x_j)+\sum_{a\neq b}^m\xi(x_k)^\top\left(\sum_{i=1}^n\tau_i(x_a)\tau_i(x_b)^\top\right)\xi(x_\ell)\right].
\end{align*}
Define \[Q_{a,b}\coloneqq\sum_{i=1}^n\tau_i(x_a)\tau_i(x_b)^\top.\] Then, by letting \[\bm{\xi}\coloneqq\begin{pmatrix}
\xi_1(x_1) & \xi_2(x_1) & \dots & \xi_L(x_1)\\
\xi_1(x_2) & \xi_2(x_2) & \dots & \xi_L(x_2)\\
\vdots & \vdots & \ddots & \vdots\\
\xi_1(x_m) & \xi_2(x_m) & \dots & \xi_L(x_m)
\end{pmatrix}\] and \[\bm{Q}\coloneqq[Q_{a,b}]=\begin{pmatrix}
Q_{1,1} & Q_{1,2} & \dots & Q_{1,m}\\
Q_{2,1} & Q_{2,2} & \dots & Q_{2,m}\\
\vdots & \vdots & \ddots & \vdots\\
Q_{m,1} & Q_{m,2} & \dots & Q_{m,m}
\end{pmatrix},\] one can easily check that
\begin{align*}
\operatorname{trace}(\bm{\xi}^\top\bm{Q}\bm{\xi})&=\sum_{\ell=1}^{L}\sum_{i=1}^n\inner{T_{\bar{\mu}\to\mu_i}-\id,\xi}_{\bar{\mu}}^2.
\end{align*} 
It follows then problem \eqref{eqn:orig_prob}, short of the tangential constraint, is equivalent to the eigenvalue problem
\begin{maxi}[2]
{\scriptstyle\bm{\xi}}{\operatorname{trace}(\bm{\xi}^\top\bm{Q}\bm{\xi})}{\label{eqn:eigen_prob}}{}
\addConstraint{\bm{\xi}^\top\bm{\xi}=I}.
\end{maxi}
Note that \(\bm{Q}=\bm{P}^\top\bm{P}\), with \[\bm{P}\coloneqq\begin{pmatrix}
\tau_{1}(x_1) & \tau_{1}(x_2) & \dots & \tau_{1}(x_m)\\
\tau_{2}(x_1) & \tau_{2}(x_2) & \dots & \tau_{2}(x_m)\\
\vdots & \vdots & \ddots & \vdots\\
\tau_{n}(x_1) & \tau_{n}(x_2) & \dots & \tau_{n}(x_m)\\
\end{pmatrix}.\] Therefore, WT-PCA can equivalently be carried out via the singular value decomposition (SVD) of \(\bm{P}\).

\end{document}